\newlength{\defbaselineskip}
\newcommand{\figleft}{{\em (Left)}}
\newcommand{\figright}{{\em (Right)}}
\newcommand{\captiona}{{\em (a)}}
\newcommand{\captionb}{{\em (b)}}
\newcommand{\captionc}{{\em (c)}}
\def\Figref#1{Figure~\ref{#1}}
\def\eqref#1{equation~\ref{#1}}
\def\1{\bm{1}}
\def\rmI{{\mathbf{I}}}
\def\rmS{{\mathbf{S}}}
\def\vone{{\boldsymbol{1}}}
\def\vmu{{\boldsymbol{\mu}}}
\def\vlambda{{\boldsymbol{\lambda}}}
\def\vbeta{{\boldsymbol{\beta}}}
\def\vphi{{\boldsymbol{\phi}}}
\def\MOmega{{\boldsymbol{\Omega}}}
\def\va{{\boldsymbol{a}}}
\def\vc{{\boldsymbol{c}}}
\def\vd{{\boldsymbol{d}}}
\def\ve{{\boldsymbol{e}}}
\def\vf{{\boldsymbol{f}}}
\def\vh{{\boldsymbol{h}}}
\def\vk{{\boldsymbol{k}}}
\def\vq{{\boldsymbol{q}}}
\def\vr{{\boldsymbol{r}}}
\def\vs{{\boldsymbol{s}}}
\def\vv{{\boldsymbol{v}}}
\def\vx{{\boldsymbol{x}}}
\def\vy{{\boldsymbol{y}}}
\def\vz{{\boldsymbol{z}}}
\def\vrep{{\boldsymbol{k}}}
\def\rep{{k}}
\def\mD{{\bm{D}}}
\def\mF{{\bm{F}}}
\def\mH{{\bm{H}}}
\def\mK{{\bm{K}}}
\def\mP{{\bm{P}}}
\def\mQ{{\bm{Q}}}
\def\mS{{\bm{S}}}
\def\mY{{\bm{Y}}}
\def\mLambda{{\bm{\Lambda}}}
\DeclareMathAlphabet{\mathsfit}{\encodingdefault}{\sfdefault}{m}{sl}
\SetMathAlphabet{\mathsfit}{bold}{\encodingdefault}{\sfdefault}{bx}{n}
\newcommand{\R}{\mathbb{R}}
\pgfplotsset{compat=1.16} %
\newtcolorbox{highlightbox}{
  colback=green!12,    %
  colframe=green!120!black, %
  boxrule=0.5pt,        %
  arc=6pt,              %
  left=6pt,             %
  right=6pt,            %
  top=6pt,              %
  bottom=6pt,           %
  boxsep=0pt,           %
}
\definecolor{c1}{HTML}{765D97} 
\definecolor{c2}{HTML}{900C3F}
\definecolor{c3}{HTML}{fc6160}
\definecolor{myblue}{HTML}{E6F3FC} 
\definecolor{mygray}{HTML}{DBE2E9} 
\definecolor{mygreen}{HTML}{006400}
\newtheorem{theorem}{Theorem}[section]
\newtheorem{proposition}[theorem]{Proposition}
\newtheorem{remark2}[theorem]{Remark}
\newcommand{\op}[1]{\operatorname{#1}}
\newcommand{\method}[1]{$\operatorname{#1}$}
\definecolor{dark2orange}{rgb}{0.9, 0.4, 0.}
\definecolor{dark2purple}{rgb}{0.4, 0.4, 0.8}
\newcommand{\eq}[1]{(\ref{#1})}
\newcommand{\model}{Lattice\xspace}
\renewcommand{\exp}{\texttt{exp}}
\renewcommand{\log}{\texttt{log}}
\renewcommand{\captionsize}{\footnotesize}
\newcommand{\nxt}{t}
\newcommand{\cur}{t-1}
\definecolor{codegreen}{rgb}{0,0.5,0}
\definecolor{codegray}{rgb}{0.5,0.5,0.5}
\definecolor{codepurple}{rgb}{0.58,0,0.82}
\definecolor{backcolour}{rgb}{1,1,1}
\lstdefinestyle{mystyle}{
    backgroundcolor=\color{backcolour},   
    keywordstyle=\color{magenta},
    numberstyle=\tiny\color{codegray},
    commentstyle=\color{green!70!blue}, %
    stringstyle=\color{orange}, %
    basicstyle=\ttfamily\scriptsize,
    breakatwhitespace=false,         
    breaklines=false,                 
    captionpos=b,                    
    keepspaces=true,                 
    numbers=left,                    
    numbersep=3pt,                  
    showspaces=false,                
    showstringspaces=false,
    showtabs=false,                  
    tabsize=2,
    escapeinside={(*}{*)}, %
}
\newcommand{\TODO}[1]{}
 \newcommand{\todoM}[2][]{}
\newcommand{\todoV}[2][]{}
\newcommand{\mahdi}[1]{}
\newcommand{\idea}[1]{}
\newcommand{\future}[1]{}    %
\newcommand{\MAYBE}[1]{} %
\title{
\textsc{\model}: Learning to Efficiently Compress the Memory
}
\newcommand\blfootnote[1]{%
  \begingroup
  \renewcommand\thefootnote{}\footnote{#1}%
  \addtocounter{footnote}{-1}%
  \endgroup
}
\author{
Mahdi Karami$^{1 ~*}$ \quad  Razvan Pascanu$^{2}$ \quad  Vahab Mirrokni$^1$ \\
$^1$Google Research \quad $^2$Google DeepMind\\
\protect \blfootnote{\texttt{\{mahdika, razp, and mirrokni\}@google.com}}
}
\date{}
\begin{document}
\maketitle

\begin{abstract}
Attention mechanisms have revolutionized sequence learning but suffer from quadratic computational complexity. This paper introduces \model, a novel recurrent neural network (RNN) mechanism that leverages the inherent low-rank structure of K-V matrices to efficiently compress the cache into a fixed number of memory slots, achieving sub-quadratic complexity. 
We formulate this compression as an online optimization problem and derive a dynamic memory update rule based on a single gradient descent step.  The resulting recurrence features a state- and input-dependent gating mechanism, offering an interpretable memory update process. 
The core innovation is the orthogonal update: each memory slot is updated exclusively with information orthogonal to its current state, hence incorporating only novel, non-redundant data to minimize interference with previously stored information.
We derive an efficient computation for this orthogonal update rule and further approximate it with chunk-wise parallelization to ensure training scalability.
Empirically, \model outperforms strong baselines on language modeling and associative recall tasks across diverse context lengths and model sizes, achieving superior memory efficiency with significantly reduced memory sizes.

\end{abstract}

\section{Introduction}

The attention mechanism~\citep{vaswani2018attention} has become a cornerstone of sequence modeling, offering significant advantages over traditional recurrent and convolutional approaches. By enabling models to dynamically attend to relevant parts of an input sequence while leveraging parallel computation, it effectively captures long-range dependencies and enables in-context learning. These strengths have driven its widespread adoption across various domains, including natural language processing (NLP)~\citep{devlin2018bert,radford2018improving}, computer vision~\citep{dosovitskiy2020image, arnab2021vivit}, 
and graph structure learning and generation~\citep{yun2019graph,dwivedi2020generalization,karami2024higen,behrouz2024BestOfBoth}. 
However, despite its effectiveness, the quadratic time and space complexity of attention limits its scalability in long sequence modeling.
Additionally, its reliance on an unbounded cache leads to inefficient memory management, further limiting its applicability in resource-constrained settings.
These challenges have motivated the development of alternative architectures that aim to retain the expressivity of Transformers while addressing its computational bottlenecks.

Sequence mixing approaches like state space models (SSMs)~\citep{gu2021efficiently, gu2020hippo,gu2022S4D, gu2020hippo, mehta2022long} and linear attention variants~\citep{katharopoulos2020transformers, choromanski2020rethinking} have recently gained renewed interest as promising alternatives to softmax attention. 
While traditional SSMs, with their inherent linear recurrent structure, offer parallelization during training, they often struggle to match the expressivity of standard attention.
Linear attention methods reduce complexity by approximating the attention matrix but can sacrifice accuracy. More recently, input-dependent SSMs~\citep{gu2023mamba, mamba2} and modern gated linear RNNs~\citep{orvieto2023LRU, de2024griffin, beck2024xlstm, peng2025rwkv, yang2024gatedDeltaNet} have demonstrated enhanced expressiveness and improved in-context learning while enabling parallelization through techniques like the associative scan~\citep{blelloch1990prefix, smith2023simplified, de2024griffin}. 
However, a fundamental challenge remains: their ability to efficiently compress and summarize information over very long contexts is often limited by their fixed-size hidden states~\citep{arora2024simple}. Moreover, their linear updates to memory lack efficient mechanisms for selective interaction between stored information and incoming keys, limiting their ability to discard irrelevant or redundant content dynamically.
Non-linear recurrent networks have been revisited in recent works ~\citep{beck2024xlstm,sun2024ttt,titans2024,karami2025trellis}, offering expressive sequence models.
On the other hand, global convolutions~\citep{romero2021ckconv, li2022makes, poli2023hyena} and their input-dependent variants~\citep{karami2019invertible, karami2024orchid} offer another direction for long context modeling by dynamically adapting convolutional filters to the input, but they are not inherently compatible with causal modeling, which is used in autoregressive language generation.

\begin{figure}[t]
\begin{minipage}[t]{0.55\textwidth}
\centering
    \begin{adjustbox}{width=1.\linewidth}
        \begin{tikzpicture}[scale=1.2, every node/.style={font=\small}]

  \draw[->] (-.5,0) -- (3,0) node[right] {$x$};
  \draw[->] (0,-1.) -- (0,3) node[above] {$y$};

  \coordinate (O) at (0,0);
  \coordinate (s) at (1,2);
  \coordinate (v) at (2,1);
  \coordinate (vproj) at (1.2, -0.6);
  
  \coordinate (snext) at (2.2, 1.4);
  
  \coordinate (snext0) at (3., 3.);
  
  \draw[->,  very thick, red] (O) -- (s) node[midway, above left] {$\vs_{\cur}$};
  
  \draw[->, very thick, blue] (O) -- (v) node[below] {$\vh_t$};
  
  \draw[->, very thick, green!70!black] (O) -- (vproj) node[below] {$\vh_t^{\perp {\vs}_{\cur}} = \mathbf{P}({\vs}_{\cur}) \, \vh_t$};
  
  \draw[dashed] (v) -- (vproj);
  
  \draw[dashed, orange, domain=-.5:3] plot (\x, {-0.5*\x}) node[below] {
  \tiny Orth. complement 
  space of $\vs_{\cur}$
  };

\node at (1,-1.8) {\captiona};

\begin{scope}[xshift=4cm]

  \draw[->] (-.5,0) -- (3,0) node[right] {$x$};
  \draw[->] (0,-.5) -- (0,3) node[above] {$y$};

  \coordinate (O) at (0,0);
  \coordinate (s) at (1,2);
  \coordinate (v) at (2,1);
  \coordinate (vproj) at (1.2, -0.6);
  
  \coordinate (snext) at (2.2, 1.4);
  
  \coordinate (snext0) at (3., 3.);
  
  \draw[->,  very thick, red] (O) -- (s) node[midway, above left] {$\vs_{\cur}$};

  \draw[->, very thick, green!70!black] (O) -- (vproj) node[below] {$\vh_t^{\perp {\vs}_{\cur}} = \mathbf{P}({\vs}_{\cur}) \, \vh_t$};
  
  \draw[dashed] (snext) -- (vproj);

  \draw[dashed] (snext) -- (s);

  \draw[->, very thick, purple] (O) -- (snext) node[above] {$\vs_{\nxt}$};
    
  \draw[->, very thick, dashed, violet] (O) -- (snext0) node[above] {$\hat{\vs}_{\nxt}$};
\node at (1.5,-1.8) 
{\captionb};

\end{scope}

\end{tikzpicture}
    \end{adjustbox}
\end{minipage}%
  \quad
\begin{minipage}[t]{0.43\textwidth}
\centering
 \vskip -185pt
    \begin{adjustbox}{width=.9\linewidth}
        \tdplotsetmaincoords{70}{120}

\begin{tikzpicture}[tdplot_main_coords, scale=.65]
    \def\N{5}
    
    \draw[->, thick] (0,0,0) -- (4.5,0,0) node[anchor=north east] {$m$};
    \draw[->, thick] (0,0,0) -- (0,5,0) node[anchor=north west] {$t$};
    \draw[->, thick] (0,0,0) -- (0,0,4) node[anchor=south] {$d$};
    
    \foreach \y in {0,...,4} {
        \foreach \z in {0,...,3} {
            \pgfmathtruncatemacro{\cond}{(\y==3) || (\z==2)}
            \ifnum\cond=1
                \draw[gray, thin] (0, \y, \z) -- (3, \y, \z);
            \fi
        }
    }
    
    \foreach \x in {0,...,3} {
        \foreach \z in {0,...,3} {
            \pgfmathtruncatemacro{\cond}{(\x==2) || (\z==2)}
            \ifnum\cond=1
                \draw[gray, thin] (\x, 0, \z) -- (\x, 4, \z);
            \fi
        }
    }
    
    \foreach \x in {0,...,3} {
        \foreach \y in {0,...,4} {
            \pgfmathtruncatemacro{\cond}{(\x==2) || (\y==3)}
            \ifnum\cond=1
                \draw[gray, thin] (\x, \y, 0) -- (\x, \y, 3);
            \fi      
        }
    }
    
      \foreach \x in {0,...,2} {
        \foreach \y in {0,...,3} {
          \foreach \z in {0,...,2} {
            \pgfmathtruncatemacro{\cond}{(\x>=2) || (\y>=3) || (\z>=2)}
            \ifnum\cond=1
                \pgfmathsetmacro{\redcomp}{255 - 60*\x}%
                \pgfmathsetmacro{\greencomp}{255 - 50*\y}%
                \pgfmathtruncatemacro{\redcompInt}{\redcomp}%
                \pgfmathtruncatemacro{\greencompInt}{\greencomp}%
                \node[draw, circle, fill={rgb,255:red,\redcompInt; green,\greencompInt; blue,200}, inner sep=2.6pt] at (\x,\y,\z) {};%

            \fi
          }
        }
    } 
    
    \foreach \y in {0,...,4} {
        \foreach \z in {0,...,3} {
            \ifthenelse{\equal{\y}{4} \OR \equal{\z}{3}}{
                \draw[black, thick] (0, \y, \z) -- (3, \y, \z);
            }{
             }{}
            }
        }
    
    \foreach \x in {0,...,3} {
        \foreach \z in {0,...,3} {
            \ifthenelse{\equal{\x}{3} \OR \equal{\z}{3}}{
                \draw[black, thick] (\x, 0, \z) -- (\x, 4, \z);
            }{
             }{}
        }
    }
    
    \foreach \x in {0,...,3} {
        \foreach \y in {0,...,4} {
            \ifthenelse{\equal{\x}{3} \OR \equal{\y}{4}}{
                \draw[black, thick] (\x, \y, 0) -- (\x, \y, 3);
            }{
             }{}
        }
    }
    
    \definecolor{DARK_blue}{RGB}{0,0,139} %
    \definecolor{blue}{RGB}{0,0,255}       %

      \foreach \x in {0,...,3} {
        \foreach \y in {0,...,4} {
          \foreach \z in {0,...,3} {
            \pgfmathtruncatemacro{\cond}{(\x>=3) || (\y>=4) || (\z>=3)}
            \ifnum\cond=1
                \pgfmathsetmacro{\redcomp}{255 - 60*\x}%
                \pgfmathsetmacro{\greencomp}{255 - 50*\y}%
                \pgfmathtruncatemacro{\redcompInt}{\redcomp}%
                \pgfmathtruncatemacro{\greencompInt}{\greencomp}%
                \node[draw, circle, fill={rgb,255:red,\redcompInt; green,\greencompInt; blue,200}, inner sep=3pt] at (\x,\y,\z) {};%

            \fi
          }
        }
    } 
    
\end{tikzpicture}
    \end{adjustbox}
    \\
    \vskip 30pt
    \small{\captionc}
\end{minipage}%
\caption{  \label{fig:ortho_schemas}
A geometric visualization of the proposed update rule.
\captiona~A single current state vector, $\vs_{\cur} = \mathbf{S}_{\cur}[: \, , i]$, an incoming token representation, $\vh_t$, and its component orthogonal to the current state, $\vh_t^{\perp {\vs}_{\cur}}$.
\captionb~Comparison of the updated state according to the proposed update rule
($\vs_{\nxt}= \vs_{\cur} + \alpha_{i,t} \, \vh_t^{\perp {\vs}_{\cur}}$) 
and the updated state resulting from the superposition recurrence update of the standard linear attention ($\hat{\vs}_{\nxt}= \vs_{\cur} + \alpha_{i,t} \, \vh_t$, shown with a dashed arrow).
For simplicity, a unit writing intensity ($\alpha_{i,t}=1$) is assumed in both recurrent update rules.
\captionc~Visualization of the relationships between ${d \times m}$ state matrices over time in state-dependent compression, depicted as interconnections of nodes in a 3D lattice. Each memory slot (state vector) is represented by a unique color.
}
\end{figure}

\paragraph{Summary of contributions}
In this paper, we propose \model, a novel approach designed to address quadratic complexity of the attention layers.
Our method compresses the cache into a fixed number of slots by leveraging the inherent low-rank structure of K-V matrices within an online optimization framework.
This approach allows us to derive efficient, recursive memory update rules conditioned on its existing state and the current token with sub-quadratic complexity. 
In contrast to existing SSMs/RNNs, which often rely on heuristics for memory management and lack explicit optimization for compression, we formulate the compression task as an optimization problem and use online gradient descent to drive the recurrent update rule for the memory, which results in an interpretable and expressive non-linear recurrent model. 
\model updates 
each memory slot exclusively with non-redundant information, specifically by incorporating only  the component of the input token that is  orthogonal to the current state of that memory slot.
To ensure scalability, we derive an efficient, chunk-wise parallel form for this orthogonal update rule.
Finally, we empirically demonstrate the superior memory efficiency and language modeling capabilities of \model.

\section{Background}

Given an input sequence $\mathcal{X} = [\vx_1, \dots, \vx_T]$ with $\vx_t \in \mathbb{R}^{d}$, the causal softmax attention mechanism generates output tokens $\vy_t \in \mathbb{R}^{d}$, by attending to past tokens as:
\begin{align} \label{eqn:softmax}
\vy_t = \mathcal{V}_t ~ \texttt{Softmax}(\mathcal{K}_t^\top ~ \vq_t)  ~.
\end{align}
Here, the queries, keys, and values are computed by linear projections of the input: 
$\vq_t = \mathbf{W}_q~ \vx_t$, $\vk_t = \mathbf{W}_k~ \vx_t$, $\vv_t = \mathbf{W}_v~ \vx_t$, where $\mathbf{W}_q,$ $\mathbf{W}_k$, $\mathbf{W}_v$ $\in \mathbb{R}^{d \times d}$ are learnable weight matrices.
The key-value memory, represented by the caches $\mathcal{K}_t \in \mathbb{R}^{d \times t}$ and $\mathcal{V}_t \in \mathbb{R}^{d \times t}$, 
stacks the key and value vectors of each new token, resulting in a cache size that grows linearly with time.
The retrieval of relevant information from this key-value cache can be rewritten as a weighted sum: 
$
\vy_t = \mathcal{V}_t ~ \va_t, ~~
\text{where } \va_t = \texttt{Softmax}(\mathcal{K}_t^\top ~ \vq_t ) \in \mathbb{R}^{t}
$
represents the attention scores, capturing the correlations between the current token at step $t$ and its historical context (past tokens).
Consequently, the attention mechanism in \eqref{eqn:softmax} can be interpreted as performing a non-linear query over an unbounded memory.
The linear growth of the key-value cache creates a significant memory bottleneck during inference, particularly for long sequences.
Furthermore, each retrieval operation scales linearly with sequence length, resulting in an overall quadratic computational complexity $\mathcal{O}(T^2)$ for generating a full sequence of length $T$.

To address the computational and memory bottlenecks of the Softmax attention, various alternatives have been proposed~\citep{tay2022efficient}. 
A well-established approach involves employing the kernel trick to replace the softmax operation with a dot product of feature maps, $\phi(\vq_t) and \phi(\vk_t)$,~\citep{katharopoulos2020transformers}, commonly known as \textit{linear attention} (LA). This mechanism can be formulated as:
$ 
\vy_t =  \big(\sum_{i=1}^{t}\vv_i  \phi(\vk_i)^\top \big) ~ \phi(\vq_t),
$
which can be expressed as the following linear recurrent model, also known as an input dependent state-space model (SSM)\footnote{Consistent with many linear attention models, we omit the normalization term here to avoid potential numerical instabilities~\citep{qin2022devil}. Furthermore, we employ an identity mapping as the feature map, effectively absorbing any transformation into the corresponding projection layers.
}: 
\begin{align} \label{eqn:LA}
\{\vy_t\}_{t=1}^T &= \op{LA}(\{\vq_t, \vk_t, \vv_t\}_{t=1}^T)   
:= \begin{cases}
\mathbf{S}_{\nxt} = \mathbf{S}_{\cur} + \vv_t ~ \vk_t^\top ,  & \text{\textit{recurrence}}  \\
\vy_t = \mathbf{S}_{\nxt} \vq_t 
& \text{\textit{memory read-out}}  
\end{cases}
\end{align}
This representation employs a simple linear recurrence to update the matrix-valued state $\mathbf{S}_t$, which  compactly encodes key-value associations memory at each time step. Importantly, the linearity is key to achieving sub-quadratic parallel computation during training, using methods such as chunkwise computation~\citep{hua2022transformer, kacham2024polysketchformer} or parallel scan~\citep{blelloch1990prefix, smith2023simplified}, while retaining a constant-time complexity per token during inference.

An alternative strategy to ensure bounded computational and memory requirements is to maintain a fixed-size key-value cache, where the memory matrices $ \mathbf{K}, \mathbf{V} \in \mathbb{R}^{m \times d} $ are constrained to a fixed length $ m \ll T $. A standard implementation of this idea is the sliding window attention which retains the most recent $ m $ tokens via a first-in-first-out (FIFO) queue.
While computationally efficient, sliding window attention suffers from a limited receptive field. This restricts the model's ability to capture long-range dependencies and maintain global context, resulting in a poor recall-memory trade-off~\citep{arora2024simple}.
On the other hand, a growing body of research has observed that the key-value matrices in the attention layers often exhibit structured low-rank properties~\citep{wang2020linformer, chen2021scatterbrain, singhania2024loki}.  This insight suggests that instead of naively truncating memory, we can develop \textit{efficient compression} techniques that selectively distill and store the essential context while discarding less relevant or redundant information.

The update rule in the linear attention, and its gated variant, typically relies on an additive outer product of input-dependent representations, which can be generally expressed as:
$
\mathbf{S}_{\nxt} = \mathbf{S}_{\cur} + f_{g}(\vx_t) \otimes f_v(\vx_t)
$
where $f_v(\vx_t) $ is an embedding of the input token and  $f_{g}$ can be interpreted as an input gate that controls the writing intensity.\footnote{These two are also called \textit{role} and \textit{filler} vectors in tensor product representation~\citep{smolensky1990tensor}.}
While this \textit{linear rank-one modification} to the state matrix (\textit{a.k.a.} Hebbian-like update rule~\citep{hebb2005organization}) enables efficient parallel computation, it suffers from a key limitation:  the additive update term in the recurrence is agnostic to the current  memory state $\mathbf{S}_{\cur}$ and 
operates independently of it. 
This lack of state awareness can cause \textit{key interference} and eventually lead to an \emph{overcapacity regime}~\citep{schlag2021linear},  where multiple tokens attempt to write to the same memory slot when the memory size is smaller than the sequence length.

Based on this insight, ideally, the writing intensity of the $t$-th token $\vx_t$ to the $j$-th memory slot, $(\mathbf{S})_{_{j,:}}$, should depend on the interaction between the new token itself and the content of that slot. 
From a gating perspective, the gating mechanism should have access to the current state of the memory to make informed decisions about which information to add or discard~\citep{hochreiter1997LSTM, gers2002peepholeLSSTM}.
This requires a \emph{state-dependent gating} mechanism that dynamically modulates updates based on the current memory state. 
Although a naive conditioning on the state can break the ability to parallelize computations, and hence it has been avoided in previous works, we address this challenge through chunk-wise approximations later in the manuscript.
In the following section, we frame the role of the recurrent layer as solving an online optimization problem and derive an optimal update rule to compress and retain essential information from a sequence.

\section{Compression Layer}

\paragraph{State-Dependent Compression for Unbounded Caches}

Our objective is to develop a compression model that dynamically updates and maintains a compact representation of the contextual history—conventionally stored in the key and value caches of a transformer model—in a streaming manner.  As new tokens arrive, the model selectively distills and stores essential contextual information into a compressed memory matrix. 
This enables computationally efficient querying, as the memory read-out is processed using the compressed state, i.e.,
$ \vy_t = \mathbf{S}_t \hat{\vr}_t$ instead of querying the full cache. 
Here, $\hat{\vr}_t$ represents a retrieval vector analogous to the attention weights in standard attention layers.

This lossy compression approach entails a trade-off between computational efficiency, memory usage, and query precision. A more compact memory representation (i.e., smaller $m$) reduces both computational cost and memory footprint, but at the expense of information loss and lower fidelity in reconstructing the original context, thereby diminishing the overall expressivity of the model.
We aim to design an optimal lossy compression layer that minimizes this precision loss.
We formulate this problem as an input reconstruction task, where we enforce:
$\mathbf{x}_t \approx \tilde{\mathbf{x}}_t =  \mathbf{S}_t \vrep_t$.
Here, $\tilde{\mathbf{x}}_t$ is the reconstructed input, $\mathbf{S}_t$ represents the dynamically updated state,
and $\boldsymbol{\rep}_t$ is a latent representation vector\footnote{Note that while the objective reconstructs from $\mathbf{S}_t$ only the current time step $\vx_t$, due to the online gradient descent form of how this objective is minimized, the state $\mathbf{S}_t$ implicitly compresses the history, allowing it to reconstruct the entire sequence, i.e. $\mathbf{S}_t \vrep_k \approx \vx_k$.
Since $\mathbf{S}_t$ has the same interpretation as the state in RNNs and SMMs, the same notation is reused.}.

Inspired by classical representation learning techniques such as dictionary learning, sparse coding, and structured matrix factorization~\citep{mairal2009online, lyu2020online}\footnote{This problem has been studied under various names over the decades, including dictionary learning, factor analysis, topic modeling, and component analysis, each with slightly different constraints and emphases~\citep{lyu2020online}.}, 
we interpret our approach as dynamically learning and updating basis vectors (a.k.a. dictionary atoms) and their corresponding latent coefficients (analogous to sparse codes).

\subsection{Decoding Layer} \label{sec:dec}
For each input sequence, we model a \emph{decoding layer}, denoted as $g(\vk_t; \mathbf{S}_t)$, which operates on the latent representation $\vrep_t$ and is parameterized by the state matrix $\mathbf{S}_t$.
Unlike standard neural network layers, here we aim to dynamically update $\mathbf{S}_t$, over the course of a sequence, thereby effectively memorizing and encoding the historic context up to time $t$.
Functionally, this acts as a decoding layer with an \textit{internal state}, or equivalently, a \textit{fast decoding layer}.
Specifically, each token embedding ${\vv}_t$ is paired with its corresponding latent representation (code) ${\vrep}_t$,
and the decoding function $g({\vrep}_t; \mathbf{S}_t)$ aims to reconstruct ${\vv}_t$. 
To achieve this, we formulate an optimization problem that minimizes a loss function $\ell$—quantifying the dissimilarity between the decoded output and the target vector—as its objective at each time step:
\begin{align}
\mathcal{L}_t = \ell \big( g({\vrep}_t; \mathbf{S}_t),~  {\vv}_t  \big), ~~   \mathbf{S}_t \in \mathbb{R}^{d \times m}, ~ \vv_t \in \mathbb{R}^{d}, ~ \vrep_t \in \mathbb{R}^{m}
\label{eq:decoding_loss}
\end{align}
We refer to this objective as the \textit{compression loss} throughout this paper. 
Here, the latent representation $\vrep_t$ is generated by a model-based encoder network, modeled  simply as a linear projection of the input:
${\vrep}_t = \mathbf{W}_{\rep} \mathbf{x}_t $
where $\mathbf{W}_{\rep} \in \mathbb{R}^{m \times d_x}$ is a projection weight matrix. 
This weight remains fixed during the internal state updates and is trained jointly with the rest of the model parameters in the outer training loop.
This setup aligns with established meta-learning frameworks~\citep{schmidhuber1992learning,thrun1998learning,andrychowicz2016learning,sun2024ttt} or bilevel optimization approaches~\citep{liu2022bome, chen2022gradient}.

The proposed framework consists of two distinct types of parameters: (I) the internal states of the compression layers, $\mathbf{S}_t$, which dynamically store in-context information for each sequence, and (II) the outer model parameters, including the projection layer weights, collectively denoted  as $\boldsymbol{\mathcal{W}}$, which capture broader patterns across the training set. This leads to a bilevel learning process composed of:
\begin{itemize}
\item \emph{Inner Loop (State Update):}  A fast update mechanism that adapts the internal states $\mathbf{S}_t$ for each token within a sequence by minimizing the compression loss in \eqref{eq:decoding_loss}. Each sequence effectively serves as a distinct dataset for the inner loop, which encodes in-context information into a sequence of evolving states  
$\{\mathbf{S}_t\}_{t=1}^T$.
Throughout this process, the outer model weights,
$\boldsymbol{\mathcal{W}}$, remain frozen.
\item  \emph{Outer Training Loop:} The standard training of the neural network that learns $\boldsymbol{\mathcal{W}}$ by minimizing the average loss across all training sequences for the (self-)supervised learning task.   
This slower loop typically employs standard optimizers such as ADAM~\citep{kingma2014adam} to learn generalizable patterns from the training dataset.  
\end{itemize}
From an optimization perspective, this bilevel process is analogous to alternating optimization~\citep{goldstein2014fast}, where the inner loop optimizes the state $\mathbf{S}_t$ while keeping $\boldsymbol{\mathcal{W}}$ fixed, and the outer loop optimizes $\boldsymbol{\mathcal{W}}$ based on the adapted states.

The focus of this work is on designing an optimal update rule for the memory states.
Due to its streaming nature, a standard approach for a sequence model is to treat the inner loop as an online regression problem and employ steepest descent. Specifically, the internal state is dynamically updated using a single gradient descent step per token:
\begin{align} \label{eqn:OGD}
    \mathbf{S}_{\nxt} = \mathbf{S}_{\cur} - \gamma_t \nabla_{S} \mathcal{L}(\mathbf{S}_{\cur}, \vv_t, \vrep_t) 
\end{align}
This recursive update yields a sequence of states $\{\mathbf{S}_t\}_{t=1}^T$, where each new state $\mathbf{S}_{\nxt}$ is a nonlinear function of the current state and the current input tokens, ensuring a causal and context-dependent evolution of the internal state.

\subsubsection{State Normalization} \label{sec:state_norm}
Following common practices in dictionary and subspace learning—where basis vectors (a.k.a. dictionary atoms or principal components) are normalized—we apply column-wise normalization to each state vector of the state matrix.
Hence, the decoding function is defined using the normalized state matrix $\phi(\mathbf{S}_t )$ as:
\[\hat{\vv_t} = g({\vrep}_t; \mathbf{S}_t) = \phi(\mathbf{S}_t ){\vrep}_t. \]
At each time step, the internal states are updated to ensure the linear combination of normalized state vectors closely approximates the target vector $\vv_t$. We explore two objectives to achieve this.

First, we adopt the standard $\ell_2$ reconstruction loss, which measures the squared Euclidean distance between the decoded vector and the target:
\begin{align}
\mathcal{L}_t = \| \phi(\mathbf{S}_t) {\vrep}_t -  {\vv}_t \|^2, ~~   \mathbf{S}_t \in \mathbb{R}^{d \times m}, ~ \vv_t \in \mathbb{R}^{d}, ~ \vrep_t \in \mathbb{R}^{m}
\label{eq:decoding_stateNorm_loss}
\end{align}
To derive the closed-form gradient of this objective, let's define the normalized state matrix:
$ \Phi = [\vphi_1, ..., \vphi_m] $, where $ \vphi_i = \frac{\vs_i}{\|\vs_i\|} $ and $\vs_i$ is the $i$-th column of $\mathbf{S}_{\cur}$ ($i$-th basis vector), and denote the reconstruction error as $ \ve_t := \phi(\mathbf{S}_{\cur}) {\vrep}_t- {\vv_t}$.
The Jacobian of the normalization function is
$$
\mathbf{J}_\phi (\vs_i)  =  \frac{1}{\|\vs_i\|}
\left(\mathbf{I} - \frac{\vs_i \vs_i^\top}{\|\vs_i\|^2}\right)
= \frac{\mathbf{P}({\vs}_i)}{\|\vs_i\|}.
$$
Therefore, by applying the chain rule, the gradient of the loss with respect to $ \mathbf{S} $ is given by:
\begin{align} \label{eq:decoding_recurrence}
\nabla_{\mathbf{S}} \mathcal{L}_t  & =
\begin{bmatrix} 
 \ve_t^\top \frac{\mathbf{P}({\vs}_1)}{\|\mathbf{s}_1\|} \rep_{t_1},  
 & \dots, &   
  \ve_t^\top \frac{\mathbf{P}({\vs}_m)}{\|\mathbf{s}_m\|} \rep_{t_m} 
\end{bmatrix} 
=  (\ve_t^\top \times_1 \mathcal{P}) \odot \vrep_t^\top 
\end{align}
where $\odot$ denotes the element-wise (Hadamard) product with broadcasting, and $\times_1$ is a vector-tensor product defined as
$\ve^\top \times_1 \mathcal{P} := \begin{bmatrix} \ve^\top \mathcal{P}_{:,:,1} ,~ \dots, \ve^\top \mathcal{P}_{:,:,m} \end{bmatrix}
$\footnote{This can be implemented using Einstein summation as \texttt{einsum("d1, d1 d m -> d m", e, J)}. 
},
while the tensor 
$\mathcal{P} := 
\begin{bmatrix} 
\frac{\mathbf{P}({\vs}_1)}{\|\mathbf{s}_1\|}, \dots,\frac{\mathbf{P}({\vs}_m)}{\|\mathbf{s}_m\|} 
\end{bmatrix}
\in \mathbb{R}^{d \times d \times m}$ is formed by stacking Jacobian matrices along the last dimension.

Alternatively, the compression objective can be formulated to maximize the dot-product similarity between the decoded output and target representation: 
\begin{align}
\mathcal{L}_t = - \langle \phi(\mathbf{S}_t) \vk_t ,~ \vv_t \rangle, ~~   \mathbf{S}_t \in \mathbb{R}^{d \times m}, ~ \vv_t \in \mathbb{R}^{d}, ~ \vrep_t \in \mathbb{R}^{m}
\label{eq:similarity_obj_stateNorm}
\end{align}
The closed-form expression for the gradient of this loss can be derived similarly: 
\begin{align}
  \nabla_{\mathbf{S}} \mathcal{L}_t   = - \vv_t^\top 
  \times_1
  \begin{bmatrix} 
    \frac{\mathbf{P}({\vs}_1)}{\|\mathbf{s}_1\|}, \dots,     \frac{\mathbf{P}({\vs}_m)}{\|\mathbf{s}_m\|} 
\end{bmatrix} \odot \vrep_t^\top
\label{eq:similarity_recurrence}
\end{align}

\begin{highlightbox}
This gradient derivation reveals a \emph{highly interpretable and interesting update rule}.
The matrix
$\mathbf{P}({\vs}_i) = \mathbf{P}({\vphi}_i) := \left( 
\mathbf{I} -
\frac{\vs_{i} {\vs}_{i}^\top}{\|\vs_{i}\|^2}  \right)  $,  which appears in ~\eq{eq:decoding_recurrence} and \eq{eq:similarity_recurrence},
is formally known as the \textit{projection matrix onto the orthogonal complement} of $ {\vs}_i$ in linear algebra~\citep[\S 3.3]{strang2000linearAlgebra}.
This insight implies that the update for each memory slot (column $ \vs_i$) is driven by a projection of the input vector (e.g., $\vh_t \! = \! -\vv_t$ in \eq{eq:similarity_recurrence} or $\vh_t \!= \!\ve_t$ in \eq{eq:decoding_recurrence}) onto the space orthogonal to that slot. 
This suggests an interpretable decomposition of the $\vh_t$ into two components:
I) $\vh_t^{\perp \vs_i}$, \emph{the component orthogonal to $\vs_i$}, which is \emph{used} to update the memory slot.
II) $\vh_t^{\| \vs_i}$, {the component of $\vh_i$ aligned} with $\vs_i$, which is \emph{discarded} in the update rule, ensuring non-redundant updates.
This implies that \emph{each memory slot is updated only with new information that is not already captured in that slot}. 
The scalar $\rep_{i,t}$ acts as a \emph{writing intensity}, determining the t-th token's contribution to the $i$-th memory slot.
This orthogonal update process is visualized in \Figref{fig:ortho_schemas}.
\end{highlightbox}

Therefore, applying online gradient descent (OGD)~\eq{eqn:OGD} to the compression losses offers a principled approach for deriving a recurrent update rule based on an orthogonal projection onto the current state.
We unify these update rules into a general formulation, referred to as \emph{Orthogonal State Recurrence (OSR)}:
\begin{align} \label{eq:OSR}
\mathbf{S}_{\nxt} = \mathbf{S}_{\cur} - \gamma_t \vh_t^\top
    \times_1
    \begin{bmatrix} 
        \frac{\mathbf{P}({\vs}_1)}{\|\vs_1\|}, \dots,     \frac{\mathbf{P}({\vs}_m)}{\|\vs_m\|} 
    \end{bmatrix} \odot \vrep_t^\top  
\end{align}
Here, $\vh_t := \ve_t$ for the $\ell_2$ loss and $\vh_t := -\vv_t$ for the dot-product similarity objective.
Table \ref{tbl:ogd-list}  compares the proposed OSR updates of the compression layers with the OGD-based recurrences of existing RNNs.
An alternative encoding representation is detailed in Appendix \ref{sec:encoding}, while a simplified form of the update is provided in Section \ref{sec:complexity}.

\subsection{Stabilizing Memory Updates via Normalization}

At each recurrence step, a memory slot is updated by incorporating only the component of the new information that is \textit{orthogonal} to its current state. Formally, we update the $i$-th memory slot as
$
\vs_{i,\nxt} \text{ = } \vs_{i,\cur} \text{ + } \Delta \vs_{i,t},
$
where
$
\Delta \vs_{i,t} := c_{i,t} \, \vh_t^{\perp \vs_{i,\cur}}
$
, with $\vh_t^{\perp \vs_{i,\cur}}$ denoting the orthogonal component of the input relative to $\vs_{i,\cur}$ and $c_{i,t}$ representing an input-dependent writing intensity. 
While this update scheme avoids interfering with the existing memory by \emph{adding only novel, non-redundant information}, it inherently leads to a monotonic increase in the norm of $\vs_i$ with each update, as dictated by the Pythagorean theorem:
$
\|\vs_{i,\nxt}\|^2 = \|\vs_{i,\cur}\|^2 + \|\Delta \vs_{i,t}\|^2
$.
This unbounded growth can induce numerical instability and state magnitude explosion or may dilute the effective representation of information over time.

To address this issue, we constrain the feasible set for the state vectors to the unit sphere 
$\mathcal{C} = \{ \vs \in \R^d \mid \|\vs\| = 1 \}$, and enforce this constraint by projecting the resulting Euclidean update back onto $\mathcal{C}$, denoted by $\mathcal{P}_{\mathcal{C}}(\cdot)$, at each time step.
Therefore, the effective update becomes  
\begin{align}     \label{eqn:normalized_recurrence}
    \vs_{i,\nxt} = \mathcal{P}_{\mathcal{C}}(\vs_{i,\cur} + \Delta \vs_{i,t}) = \beta_{i,t} \left(\vs_{i,\cur} + \Delta \vs_{i,t} \right),
    \\  
    \text{where }
    \beta_{i,t} = \left({1+ \|\Delta \mathbf{s}_{i,t}\|^2} \right)^{-\frac{1}{2}}, 
    \text{ assuming } \|\mathbf{s}_{i,\cur}\|=1. 
    \nonumber
\end{align}
This normalization, achieved by scaling with $\beta_{i,t}$, ensures that the updated state $\vs_{i,\nxt}$ remains on the unit sphere  while preserving the steepest-descent direction, thereby maintaining stability and allowing the model to effectively store relevant information.
Functionally, this normalization of the recurrence terms acts analogously to a forgetting gate in RNNs and adaptively normalizes the effective step size of the update term—a technique known to improve convergence in optimization algorithms like Adagrad~\citep{duchi2011adaptive} and Adam~\citep{kingma2014adam}.
We formalize the relationship between the proposed 
\emph{Normalized Orthogonal State Recurrence} (NOSR)
and Riemannian optimization~\citep{absil2009optimization, boumal2023introduction}, in the following proposition.

\begin{proposition}[Equivalence to Gradient Descent on Riemannian Manifold]
\label{thm:Riemannian}
Let $\mathcal{C} = \{ \mathbf{s} \in \mathbb{R}^d \mid \|\mathbf{s}\| = 1 \}$ denote the unit sphere. 
Then, the projected gradient update of the form 
$\vs_{i,\nxt} = \mathcal{P}_{\mathcal{C}}(\vs_{i,\cur} + \Delta \vs_{i,t})$ (as in \eqref{eqn:normalized_recurrence}),
where the update term $\Delta \vs_{i,t}$ lies in the subspace orthogonal to $\vs_{i,\cur}$ (cf. \eq{eq:OSR}), is equivalent to a retraction step in Riemannian optimization~\citep{bonnabel2013stochastic}.
\end{proposition}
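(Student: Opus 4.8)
The plan is to translate each piece of \eqref{eqn:normalized_recurrence} into the language of retraction-based Riemannian gradient descent of \citet{bonnabel2013stochastic} and check the defining properties of a retraction. Recall that a \emph{retraction} at a point $\vs$ of an embedded submanifold $\mathcal{C}\subseteq\R^d$ is a smooth map $R_{\vs}\colon T_{\vs}\mathcal{C}\to\mathcal{C}$ satisfying (i) $R_{\vs}(\vzero)=\vs$ and (ii) $\tfrac{d}{d\tau}R_{\vs}(\tau\vv)\big|_{\tau=0}=\vv$ for all $\vv\in T_{\vs}\mathcal{C}$; a Riemannian (stochastic) gradient step is then $\vs^{+}=R_{\vs}\!\big(-\gamma\,\mathrm{grad}\,\mathcal{L}(\vs)\big)$. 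For the unit sphere $\mathcal{C}=\{\vs\in\R^d:\|\vs\|=1\}$, the tangent space at $\vs$ is $T_{\vs}\mathcal{C}=\{\vv\in\R^d:\langle\vv,\vs\rangle=0\}$, and the orthogonal projector onto it is exactly $\mathbf{I}-\vs\vs^\top=\mathbf{P}(\vs)$ when $\|\vs\|=1$.

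First I would establish that the increment $\Delta\vs_{i,t}$ is, up to the step size, the Riemannian gradient of the compression loss restricted to $\mathcal{C}$. By \eqref{eq:OSR} the $i$-th column of the Euclidean update is proportional to $\mathbf{P}(\vs_{i,\cur})\vh_t$, so $\langle\Delta\vs_{i,t},\vs_{i,\cur}\rangle=0$, i.e. $\Delta\vs_{i,t}\in T_{\vs_{i,\cur}}\mathcal{C}$. Moreover the projector $\mathbf{P}$ appearing there originates from the Jacobian $\mathbf{J}_\phi$ of the state-normalization map and, evaluated on the sphere, coincides with the tangent-space projector; hence $\Delta\vs_{i,t}=-\gamma_t\,(\mathrm{grad}\,\mathcal{L}_t)_{:,i}$, where $\mathrm{grad}\,\mathcal{L}_t=\mathbf{P}(\vs_{i,\cur})\nabla_{\vs_i}\mathcal{L}_t$ is the Riemannian gradient of the compression loss on $\mathcal{C}$ (equivalently, on the product manifold $\mathcal{C}^m$ with the induced product metric, since $\mathbf{S}_t$ has $m$ independently normalized columns). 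Thus $\vs_{i,\cur}+\Delta\vs_{i,t}$ is precisely the ambient point obtained by moving along the negative Riemannian gradient direction.

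Next I would exhibit $\mathcal{P}_{\mathcal{C}}$ as a bona fide retraction. For $\vs\in\mathcal{C}$ and $\vv\in T_{\vs}\mathcal{C}$, the Pythagorean identity gives $\|\vs+\vv\|^2=1+\|\vv\|^2\ge1>0$, so the metric projection of $\vs+\vv$ onto the sphere is well defined and equals the normalization $\mathcal{P}_{\mathcal{C}}(\vs+\vv)=(\vs+\vv)/\|\vs+\vv\|$; in particular this reproduces the scaling factor $\beta_{i,t}=(1+\|\Delta\vs_{i,t}\|^2)^{-1/2}$ of \eqref{eqn:normalized_recurrence}. Define $R_{\vs}(\vv):=(\vs+\vv)/\|\vs+\vv\|$. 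Axiom (i) is immediate, $R_{\vs}(\vzero)=\vs/\|\vs\|=\vs$. For axiom (ii), set $c(\tau)=\vs+\tau\vv$; then $\|c(\tau)\|=\sqrt{1+\tau^2\|\vv\|^2}$ has vanishing derivative at $\tau=0$, whence $\tfrac{d}{d\tau}R_{\vs}(\tau\vv)\big|_{\tau=0}=c'(0)/\|c(0)\|=\vv$. Hence $R_{\vs}$ is the standard metric-projection (normalization) retraction on the sphere, and combining the two steps, $\vs_{i,\nxt}=\mathcal{P}_{\mathcal{C}}(\vs_{i,\cur}+\Delta\vs_{i,t})=R_{\vs_{i,\cur}}(\Delta\vs_{i,t})=R_{\vs_{i,\cur}}\!\big(-\gamma_t\,(\mathrm{grad}\,\mathcal{L}_t)_{:,i}\big)$, which is exactly one step of retraction-based Riemannian gradient descent on $\mathcal{C}$, as claimed.

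The only step that is more than bookkeeping is the local-rigidity axiom (ii): one must verify that differentiating the normalization map at the origin of the tangent space returns the identity. The observation that makes this painless — and precisely where the orthogonality of $\Delta\vs_{i,t}$ to $\vs_{i,\cur}$ enters — is that for $\vv\perp\vs$ the radial length $\|\vs+\tau\vv\|$ is stationary at $\tau=0$ (it grows only quadratically), so the radial-correction term contributes nothing to first order. A secondary point worth spelling out is the identification, on $\mathcal{C}$, of the column-wise projector $\mathbf{P}(\vs_{i,\cur})$ in the OGD-derived rule \eqref{eq:OSR} with the orthogonal projector onto $T_{\vs_{i,\cur}}\mathcal{C}$; this is what upgrades the Euclidean gradient step to a genuinely Riemannian one, and it follows at once from the formula for $\mathbf{P}$ once the preceding normalization has enforced $\|\vs_{i,\cur}\|=1$.
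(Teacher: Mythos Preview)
Your proposal is correct and follows essentially the same route as the paper: identify $\Delta\vs_{i,t}\in T_{\vs_{i,\cur}}\mathcal{C}$ so that the Euclidean and Riemannian gradients coincide, then recognize the metric projection $\mathcal{P}_{\mathcal{C}}$ as a retraction replacing the exponential map. Your argument is in fact more thorough than the paper's, which simply asserts that projection is a first-order approximation of the exponential map without checking the retraction axioms; your explicit verification of local rigidity via the stationarity of $\|\vs+\tau\vv\|$ at $\tau=0$ and your remark on the product-manifold structure for the $m$ columns are welcome additions.
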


\begin{table}[t]
\scriptsize
\caption{ \label{tbl:ogd-list}
\footnotesize
Comparison of the objective functions and their corresponding online gradient descent updates for the proposed and existing RNNs.
We include several linear RNNs for comparison:
\method{ Linear-Attention (LA)}~\citep{katharopoulos2020transformers}, \method{Mamba2}~\citep{mamba2} and \method{DeltaNet}~\citep{schlag2021linear, yang2024parallelizing},
\method{GLA}~\citep{yang2024gatedattn}, 
\method{RWKV-6/7}~\citep{peng2024Rwkv6, peng2025rwkv}, 
\method{Gated-DeltaNet}~\citep{yang2024gatedDeltaNet},
and \method{TTT}~\citep{sun2024ttt}.
It is worth noting that, the effective recurrent update of the compression layers after re-scaling becomes online gradient descent on Riemannian manifold: 
$\displaystyle \mathbf{S}_{\nxt} = \mathbf{1} \, \vbeta_t^\top \odot \big( \mathbf{S}_{\cur} + \Delta \mathbf{S}_{t}  \big)  (\eqref{eqn:normalized_recurrence})$. 
\textsc{LA} can be interpreted as online gradient descent with a fixed step size ($\gamma_t=1$); however, more flexible, input-dependent step sizes are frequently used in recent RNNs~\citep{orvieto2023LRU,qin2024hgrn2,gu2023mamba}. 
\method{Mamba2} and \method{Gated-DeltaNet} employ a forgetting gate, which is equivalent to performing online gradient descent with L2 regularization and regularization factor $\lambda_t$.
In \method{Mamba2}, the forget gate is controlled by $\mu_t = 1-\lambda_t$, and
the reparameterization for the forget gate and step size of \method{Gated-DeltaNet} is discussed in~\citep{wang2025testTimeRegression}.
In \method{RWKV-7}, the regularizer is column-wise: 
$\frac{1}{2} \|\rmS_t^\top\|^2_{\op{diag}(\vlambda_t)} := \frac{1}{2} \op{Tr}(\rmS_t^\top \op{diag}(\vlambda_t) \rmS_t)$, resulting in a diagonal-plus-low-rank transition matrix.
Here, $\times_1$ denotes vector-tensor product defined as
$\ve^\top \times_1 \begin{bmatrix} \mathbf{J}_1 ,~ \dots, \mathbf{J}_m \end{bmatrix}  
= 
\begin{bmatrix} \ve^\top \mathbf{J}_1 ,~ \dots, \ve^\top \mathbf{J}_m \end{bmatrix}
$.
}
\setlength{\tabcolsep}{2.5pt}
\renewcommand{\arraystretch}{1.8}
\begin{tabular*}{\textwidth}{@{\quad}p{0.2\textwidth}@{\quad\quad\quad}p{0.33\textwidth}@{\quad\quad}p{0.5\textwidth}@{}}
\toprule
\textbf{Method} & \textbf{Objective} $\mathcal{L}_t$ & \textbf{Online Gradient Descent Update} \\
\midrule
\method{Linear-Attention} %
& $\displaystyle  -\langle\rmS_t \vk_t, \vv_t\rangle$ & $\displaystyle \mathbf{S}_{\nxt} = \mathbf{S}_{\cur} + \vv_t \vk_t^\top$ \\
\method{Mamba2} %
& $\displaystyle  -\langle\rmS_t \vk_t, \vv_t\rangle + \frac{\lambda_t}{2} \|\rmS_t\|_2^2$ 
& $\displaystyle \mathbf{S}_{\nxt} = \mu_t \mathbf{S}_{\cur} + \vv_t \vk_t^\top$ \\
\method{{RWKV-6 ~ \& ~ GLA}} 
& $\displaystyle  -\langle\rmS_t \vk_t, \vv_t\rangle + \frac{\vlambda_t}{2} \|\rmS_t\|_2^2$ 
& $\displaystyle \mathbf{S}_{\nxt} = \op{diag}(\vmu_t) \, \mathbf{S}_{\cur} + \vv_t \vk_t^\top$ \\
\method{DeltaNet} %
& $\displaystyle  \Big\|\mathbf{S}_t \vrep_t - \vv_t \Big\|^2 $ & $\displaystyle \mathbf{S}_{\nxt} = \mathbf{S}_{\cur}(\rmI - \gamma_t \vk_t \vk_t^T) + \gamma_t \vv_t \vk_t^T$ \\
\method{Gated-DeltaNet} %
& $\displaystyle  \Big\|\mathbf{S}_t \vrep_t - \vv_t \Big\|^2 + \frac{\lambda_t}{2} \|\rmS_t\|_2^2$ 
& $\displaystyle \mathbf{S}_{\nxt} = \mu_t \mathbf{S}_{\cur} (\rmI - \gamma_t \vk_t \vk_t^T) + \gamma_t \vv_t \vk_t^T$ \\
\method{{RWKV-7}} 
& $\displaystyle  \Big\|\mathbf{S}_t \vrep_t - \vv_t \Big\|^2 + 
\frac{1}{2} \|\rmS_t^\top\|^2_{\op{diag}(\vlambda_t)}$
& $\displaystyle \mathbf{S}_{\nxt} = \mathbf{S}_{\cur} (\op{diag}({\vmu_t}) - \gamma_t \vk_t \vk_t^T) + \gamma_t \vv_t \vk_t^T$ \\
\method{TTT} %
& $\displaystyle  \Big\| \phi(\mathbf{S}_t {\vrep}_t ) -  {\vv}_t \Big\| ^2 $ 
& $\displaystyle \mathbf{S}_{\nxt} = 
\mathbf{S}_{\cur} - \gamma_t \ve_t^\top \, \frac{\mathbf{P}({\vz}_t)}{\|{\vz}_t\|}
\vrep_t^\top
$ \\
\method{\model~(Dec)}~(\ref{eq:decoding_stateNorm_loss})  
& $\displaystyle  \Big\| \phi(\mathbf{S}_t) {\vrep}_t  -  {\vv}_t \Big\|^2 $ 
& 
$\displaystyle \mathbf{S}_{\nxt} =  \mathbf{1} \, \vbeta_t^\top \!\odot\! 
\big(
\mathbf{S}_{\cur} - \gamma_t \ve_t^\top
\!\times_1\!
\begin{bmatrix} 
    \frac{\mathbf{P}({\vs}_1)}{\|\mathbf{s}_1\|}, \dots, \frac{\mathbf{P}({\vs}_m)}{\|\mathbf{s}_m\|} 
\end{bmatrix} \!\odot\! \vrep_t^\top
\big)
$ \\
\method{\model~(Sim)}~(\ref{eq:similarity_obj_stateNorm})
& $\displaystyle  - \langle \phi(\mathbf{S}_t)^\top \vv_t ,~ \vk_t \rangle$ 
& $\displaystyle \mathbf{S}_{\nxt} = 
\mathbf{1} \, \vbeta_t^\top \!\odot\! 
\big(
\mathbf{S}_{\cur} + \gamma_t \vv_t^\top 
\!\times_1\!
\begin{bmatrix} 
    \frac{\mathbf{P}({\vs}_1)}{\|\mathbf{s}_1\|}, \!\dots\!,     \frac{\mathbf{P}({\vs}_m)}{\|\mathbf{s}_m\|} 
\end{bmatrix} \odot \vrep_t^\top
\big)
$ \\
\method{\model~(Enc)}~(\ref{eq:encoding_stateNorm_loss})  
& $\displaystyle  \Big\| \phi(\mathbf{S}_t)^\top\, \vv_t - \vk_t \Big\|^2 $ 
& $\displaystyle \mathbf{S}_{\nxt} = 
\mathbf{1} \, \vbeta_t^\top \!\odot\! 
\big(
\mathbf{S}_{\cur} - \gamma_t \vv_t^\top
\!\times_1\!
\begin{bmatrix} 
   \frac{\mathbf{P}({\vs}_1)}{\|\mathbf{s}_1\|}, \!\dots\!,     \frac{\mathbf{P}({\vs}_m)}{\|\mathbf{s}_m\|} 
\end{bmatrix} \odot \ve_t^\top  
\big)
$ \\
\bottomrule
\end{tabular*}
\end{table}

\subsection{Forgetting by State Regularization}

Similar to how regularization is used in standard neural network training to control the memorization of the model, we can apply regularization to the states in the inner loop to manage memory retention. 
Specifically, applying $\ell_2$ regularization to the state matrix $\mathbf{S}_t$ yields the regularized objective function:
$
\hat{\mathcal{L}}_t = \| \phi(\mathbf{S}_t) \, {\vrep}_t -  {\vv}_t \|^2
+ \frac{\lambda_t}{2} \| \mathbf{S}_t \|_F^2, 
$
where $\lambda_t$ is the regularization parameter and $\| \cdot \|_F$ denotes the Frobenius norm. 
Optimizing this differentiable objective using gradient descent results in a  recurrence with state decay for the decoding compression layer (\eqref{eq:decoding_stateNorm_loss}): 
\begin{align} \label{eq:dacay_recurrence}
    \mathbf{S}_{\nxt} = 
    \mu_t \mathbf{S}_{\cur} - \gamma_t \nabla_{S} \mathcal{L}(\mathbf{S}_{\cur}, \vv_t, \vrep_t), %
\end{align}
where the scalar  $ \mu_t = 1 - \gamma_t \lambda_t \in [0, ~1]$ acts as a forget gate, controlling the proportion of the past memory that is retained in the update.

Alternatively, we can induce sparsity in the memory states by applying element-wise $\ell_1$ norm~\footnote{The $\ell_1$ norm is a relaxed version of the hard sparsity constrain, which drives small states toward zero.},  resulting in the objective function:  
\begin{align}  
\hat{\mathcal{L}}_t = \| \phi(\mathbf{S}_t) \, {\vrep}_t - {\vv}_t \|^2  
+ \lambda_t \|\mathbf{S}_t\|_1.  
\end{align}  
This non-differentiable composite objective can be efficiently optimized using the \textit{Proximal Gradient Descent Algorithm}, which iteratively performs a gradient descent with the smooth component and
then applies the proximal operator associated with the non-differentiable regularizer~\citep{parikh2014proximal}.
This iterative procedure, commonly known as
the Iterative Shrinkage-Thresholding Algorithm (ISTA)~\citep{parikh2014proximal, beck2009fast}, gives the update rule:
\begin{align} \label{eq:ista_recurrence}
\mathbf{S}_{\nxt} = 
    \op{prox}_{\gamma_t \lambda_t \|\cdot \|_1} \left( \mathbf{S}_{\cur} - \gamma_t \nabla_{S} \| \phi(\mathbf{S}_t) \, {\vrep}_t - {\vv}_t \|^2 \right),
\end{align}
where the proximal operator for the $\ell_1$ norm corresponds to the \emph{shrinkage (soft thresholding) operation}, defined as: 
$
\op{prox}_{\mu_t \| \cdot \|_1} (x)= \op{sign}(x) \max( |x| - \mu_t, 0 )
$.
By suppressing small values, this recurrence promotes sparsity in the learned memory representations. 
In the following, we focus on the $\ell_2$ regularization due to its scaling weight decay form in the recurrence \eq{eq:dacay_recurrence}.

\paragraph{General Form.} By integrating all components, we arrive at the complete form for the non-linear state transition and memory read-out of \model:
\begin{align} %
\op{\model} (\{\vk_t, \vv_t, \vq_t\}_{t=1}^T) 
\!=\!   
\begin{cases}
    \mathbf{S}_{\nxt} \! = \! (\mu_t  \vone  \vbeta_t^\top) \! \odot \!\mathbf{S}_{\cur} \!-\! \gamma_t \vh_t^\top
    \!\! \times_1 \!\!
    \begin{bmatrix}
        \frac{\mathbf{P}({\vs}_1)}{\|\vs_1\|}, \!\dots\!,     \frac{\mathbf{P}({\vs}_m)}{\|\vs_m\|} 
    \end{bmatrix} 
    \!\! \odot \!  
    (\vrep_t \! \odot \! \vbeta_t)^\top  
    \\
    \vy_t \!=\! \mathbf{S}_{\nxt} {\vq}_{t}   
\end{cases} 
\nonumber
\end{align}
where $\vbeta_t~\in \mathbb{R}^m$ is the per-slot normalization factor.
We compare the proposed orthogonal state update with the delta rule linear recurrence~\citep{widrow1988adaptive} and nonlinear update rule of $\op{TTT}$~\citep{sun2024ttt} in Appendix~\ref{apdx:proofs}.
In the following we present efficient computation of the \model and a chunk-wise parallelization.

\vspace{-5pt}
\subsection{Efficient Computation} \label{sec:complexity}
\vspace{-5pt}
The update rules presented in this work (\eqref{eq:SDC_general_stateNorm}) involves computing the projection $\vh_t^{\perp \vs_{i}} = \mathbf{P}\vh_t$. 
Given the identity plus rank-one form of $\mathbf{P}$,
the projection operation reduces to a dot product and a scalar-vector multiplication:
$
\vh_t^{\perp \vs_{i}} = \vh_t - \frac{\vs_{i} (\vs_{i}^\top \vh_t)}{\|\vs_{i}\|^2},
$
avoiding a full matrix-vector multiplication.
The computational cost of each projection is linear in 
the state dimension $d$,
leading to an overall complexity of $\mathcal{O}(d\,m)$ for the full recurrence in \eqref{eq:OSR}.
By eliminating the need for vector-Jacobian-product (\texttt{vjp}) computations,
this explicit form can be expressed as follows\footnote{For notation brevity and due to the normalization step in \eqref{eqn:normalized_recurrence}, we assume $\|\mathbf{s}_{i,\cur}\|=1 $.
}: 
\begin{align} \label{eq:recurrence_detailed}
\mathbf{S}_{\nxt}  &=  
    \mathbf{G}_t \! \odot \!\mathbf{S}_{\cur} 
    +\underbrace{\vone (\hat{\vh}_t \! \odot \! \hat{\vk}_t)^\top \! \odot \!\mathbf{S}_{\cur}
    - \vh_t \hat{\vk}_t^\top
    }_{-\gamma_t \nabla_{S} \mathcal{L}(\mathbf{S}_{\cur}, \vv_t, \vrep_t)
    } \\
\text{where } & 
\mathbf{G}_t = \mu_t  \vone  \vbeta_t^\top \in \mathbb{R}^{d \times m},~~
\hat{\vh}_t = \mathbf{S}_{\cur}^\top \vh_t \in \mathbb{R}^{m} ,~~
\hat{\vk}_t = \gamma_t \vbeta_t \! \odot \! \vrep_t \in \mathbb{R}^{m}
\nonumber
\end{align}

\subsubsection{Parallel and Hardware Efficient Form} \label{sec:parallel}
Various methods have been explored to enable parallel evaluation of non-linear RNNs. One strategy, as proposed by \citet{lim2023parallelizing,gonzalez2024towards}, involves casting inference as finding the solution to a fixed-point equation, thereby achieving parallelism.
 In a different approach, \citet{sun2024ttt} introduced a parallel chunk-wise solution using a gradient approximation. This method splits a sequence into non-overlapping chunks and utilizes the state at the beginning of each chunk to approximate the gradients for the entire chunk in parallel. 
We follow this approach and denote the state at the beginning of the chunk (i.e., the final state from the preceding chunk) by $\mathbf{S}_{t'}$, where $t' \!=\! t - \op{mod}(t, C)$ with $C$ denoting the chunk size. The gradient is then approximated as 
$\nabla_{S} \mathcal{L}(\mathbf{S}_{t'}, \vv_t, \vrep_t)$. This approximation linearizes the nonlinear recurrence \eq{eq:recurrence_detailed} as:
\begin{align} \label{eq:recurrence_linear1}
\mathbf{S}_{\nxt}  &=  
    \mathbf{G}_t \! \odot \!\mathbf{S}_{\cur} 
    + \vone (\tilde{\vh}_t \! \odot \! \hat{\vk}_t)^\top \! \odot \!\mathbf{S}_{t'}
    - \vh_t \hat{\vk}_t^\top,
    ~ \text{ where } \tilde{\vh}_t \!=\! \mathbf{S}_{t'}^\top \vh_t.
\end{align}
Now, for the time steps $t\!=\! b C \!+\! \tau$ in the $b$-th block, let $\mathbf{X}^b = \vx[bC \!+\! 1 \!:\! b(C \!+\! 1)]$ denote the stacked input into the chunk-wise matrices and $ \mathbf{X}_{\tau}^b \!=\! \vx[bC+\tau]$
(similarly for other vectors such as $\vq, \vh, \vk$),
and define
$\va_\tau^b \!=\! \prod_{i = b C \!+\! 1}^{b C \!+\! \tau} \vbeta_i \mu_i$ and 
a block lower triangular tensor $\MOmega^b \in \mathbb{R}^{C \times C \times m}$ with components $\MOmega^b_{j,i,:} = \frac{\va_j^b}{\va_i^b}  \mathbb{I}_{i \le j}$ ~.
Therefore, the layer update at the chunk-level  is expressed as:
\begin{align} \label{eq:chunkwise_OSR}
\mathbf{S}_{b}  &=  
    \big(\vone ({\va_C^b} \!+\! {\vf^b})^\top \big)\! \odot \! \mathbf{S}_{b-1}
    - {\mH^b}^\top (\hat{\mK}^b \! \odot \! \MOmega^b_{C,:,:})) \\
    \mY^b &= \big(\mQ^b \! \odot \! (\mLambda^b \!+\! \mF^b) \big) \mathbf{S}_{b-1}^\top
    - \mP^b \mH^b  \nonumber
\end{align}
where $\vf^b \!=\! \op{diag}[\tilde{\mH}^b \, (\hat{\mK}^b \! \odot \! \MOmega^b_{C::})]$ and 
$\mF^b_{i n} \!=\! \sum_{\tau=1}^{C} (\tilde{\mH}^b \! \odot \! \hat{\mK}^b)_{\tau n} \, \MOmega^b_{i \tau n}$ ~ 
(refer to Appendix \ref{apdx:details} for detailed \texttt{matmul} computation of $\vf^b, ~ \mF^b$ and $\vbeta^b_\tau$).
Here, $\mP^b \in \mathbb{R}^{C \times C}$ is a lower triangular matrix with $\mP^b_{ij} = \sum_{k=1}^{m} \mQ^b_{ik} \, \hat{\mK}^b_{jk} \, \MOmega^b_{ijk}$. 
A \texttt{matmul}-optimal computation for $\mP^b$ is presented in \cite{zhang2024gated} using a sub-tiling technique.   
Furthermore, a closer inspection of \eq{eq:recurrence_detailed} reveals that the recurrence can be simplified by linearizing only the $\hat{\vh}_t$ term, leading to:
\begin{align} \label{eq:recurrence_linear2}
\mathbf{S}_{\nxt}  &=  
    \hat{\mathbf{G}}_t \! \odot \!\mathbf{S}_{\cur} 
    - \vh_t \hat{\vk}_t^\top,
    \text{~~ where } \hat{\mathbf{G}}_t = \vone  (\mu_t   \vbeta_t + \tilde{\vh}_t \! \odot \! \hat{\vk}_t) ^\top. 
\end{align}
Here, $\hat{\mathbf{G}}_t$ is parameterized as a rank-one outer product, and hence this intra-chunk update can be computed efficiently using the parallel form of gated linear attention (GLA)~\citep{zhang2024gated}.
More details on the parallel and hardware-efficient computation is presented in Appendix \ref{app:parallel_details}.

\begin{figure}
\centering
\begin{minipage}{.49\textwidth}
  \centering
  \begin{adjustbox}{width=.9\linewidth}
      \pgfplotsset{
    compat=1.16, %
    width=14cm, %
    height=9cm, %
    every axis plot/.append style={
        line width=1.5pt, %
        solid,            %
        mark size=2.5pt,  %
    }
}

\begin{tikzpicture}
    \begin{axis}[
        title={Books Dataset},
        xlabel={Context Length},
        ylabel={Perplexity  $\downarrow$},
        xmode=log, %
        log base x=2, %
        xtick={512, 1024, 2048, 4096, 8192, 16384},
        xticklabels={512, 1k, 2k, 4k, 8k, 16k},
        grid=major, %
        ymin=16.0, %
        ymax=22.8,
        legend columns=3,
        legend pos=north west, %
        legend cell align={left},
        grid=major, 
        ymode=linear,           %
    ]

    \pgfplotstableread{
    Context Transformer LinearAtt DeltaNet GatedDelta Mamba2 TTT ModelDEC ModelENC ModelSIM
    512     20.60     21.04     20.28    19.76   19.94   20.11    19.06   19.11   19.08
    1024    19.39     20.18     19.11    18.60   18.90   19.03    17.90   18.01   17.94
    2048    18.89     19.82     18.33    18.00   18.34   18.36    17.14   17.22   17.23
    4096    18.38     19.69     17.90    17.48   18.07   18.03    16.72   16.80   16.72
    8192    18.85     20.34     18.05    17.40   18.23   18.05    16.62   16.66   16.73
    16384   nan     21.86     18.12    17.49   18.48   18.46      16.97     nan     16.82
    }\datatablebooks

    \addplot+[color=cyan, mark=x] table [x=Context, y=Transformer] {\datatablebooks}; %
    \addlegendentry{\method{Transformer}++}

    \addplot+[color=blue, mark=o, dashed] table [x=Context, y=LinearAtt] {\datatablebooks}; %
    \addlegendentry{\method{Linear-Attention}}

    \addplot+[color=green, mark=triangle, solid] table [x=Context, y=DeltaNet] {\datatablebooks};
    \addlegendentry{\method{DeltaNet}}

    \addplot+[color=orange, mark=diamond] table [x=Context, y=Mamba2] {\datatablebooks};
    \addlegendentry{\method{Mamba2}}

    \addplot+[color=black, mark=x, solid] table [x=Context, y=GatedDelta] {\datatablebooks};
    \addlegendentry{\method{Gated-DeltaNet}}

    \addplot+[color=red, mark=square, solid] table [x=Context, y=TTT] {\datatablebooks};
    \addlegendentry{\method{TTT}}

    \addplot+[color=magenta, mark=star, solid] %
        table [x=Context, y=ModelDEC] {\datatablebooks};
    \addlegendentry{\textbf{\model-DEC}}
    
    \addplot+[color=brown, mark=star, solid] %
        table [x=Context, y=ModelENC] {\datatablebooks};
    \addlegendentry{\textbf{\model-ENC}}
    
    \addplot+[color=brown!50!black, mark=star, solid] %
        table [x=Context, y=ModelSIM] {\datatablebooks};
    \addlegendentry{\textbf{\model-SIM}}

    \end{axis}
\end{tikzpicture}
  \end{adjustbox}
\end{minipage}~\hfill{}~
\begin{minipage}{.49\textwidth}
  \begin{adjustbox}{width=.9\linewidth}
    \begin{tikzpicture}
\begin{axis}[
    xlabel={FLOPs},
    ylabel={Perplexity $\downarrow$},
    xmode=log, %
    ymode=log, %
    legend pos=north east, %
    grid=major, %
    width=13cm,
    height=9cm,
    ytick={9,10,11,12,13,14,15,16,17}, 
    yticklabels={9,10,11,12,13,14,15,16,17},
    ymin=9, %
    every axis plot/.append style={
        thick, %
        mark size=2pt,
    }
]

\addplot+ [
    mark=square*,
    color=gray,
] coordinates {
    (2.548615680E+18, 16.09)
    (2.721899520E+19, 11.92)
    (1.269377741E+20, 9.75)
};
\addlegendentry{\method{Transformer}++}

\addplot+ [
    mark=x,
    color=green,
] coordinates {
    (2.577931200E+18, 15.021)
    (2.747121408E+19, 11.339)
    (1.279067400E+20, 9.47)
};
\addlegendentry{\method{\model-DEC}~(\ref{eq:decoding_recurrence})}

\addplot+ [
    mark=+,
    color= blue,
] coordinates {
    (2.574609120E+18, 15.033)
    (2.743579008E+19, 11.377)
    (1.278005400E+20, 9.47)
};
\addlegendentry{\method{\model-DOT}~(\ref{eq:similarity_recurrence})}

\addplot+ [
    mark=triangle*,
    color=red,
] coordinates {
    (2.580410880E+18, 15.5)
    (2.747335680E+19, 11.6)
    (1.279131955E+20, 9.59)
};
\addlegendentry{\method{GLA}}

\addplot+ [
    mark=o, %
    color=purple,
] coordinates {
    (2.577935520E+18, 15.378)
    (2.747124864E+19, 11.438)
    (1.279067674E+20, 9.5378)
};
\addlegendentry{\method{Gated-DeltaNet}}

\addplot+ [
    mark=diamond*,
    color=orange,
] coordinates {
    (2.574609120E+18, 16.353)
    (2.743579008E+19, 12.05)
    (1.278004608E+20, 10.03)
};
\addlegendentry{\method{DeltaNet}}

\addplot+ [
    mark=star,
    color=brown,
] coordinates {
    (2.574885840E+18, 15.596)
    (2.743803648E+19, 11.81)
    (1.278071654E+20, 9.7468)
};
\addlegendentry{\method{TTT}}

\addplot+ [
    mark=pentagon*,
    color=teal,
] coordinates {
    (2.576000160E+18, 15.675)
    (2.744691840E+19, 11.824)
    (1.278338400E+20, 9.7468) 
};
\addlegendentry{\method{Mamba2}}

\end{axis}
\end{tikzpicture}
  \end{adjustbox}
\end{minipage}
  \captionof{figure}{ \footnotesize
  Scaling patterns. \figleft: Perplexity vs. context length (110M models, Books dataset). \figright:  Perplexity vs. model size (Chinchilla-style scaling, \method{FineWeb-Edu}).
  Transformer++ results are capped at $T \le 8k$ since training them on very long contexts from scratch often performs poorly \citep{touvron2023llama}. 
  }
  \label{fig:scalling_pattern}
\end{figure}

\section{Experiments}\label{sec:expriments}
\paragraph{Setup:}
We evaluate the proposed architecture across various language modeling tasks, benchmarking its performance on both short-context and long-context tasks. 
We trained models on different datasets:  
for language modeling and common-sense reasoning tasks, models are trained on the \method{FineWeb-Edu} dataset~\citep{penedo2024fineweb} with a context length of 4k, 
for sequence length scaling analysis, models are trained on the \method{Books3} dataset (a subset of The \method{Pile} dataset~\citep{gao2020pile}) with different training context lengths ranging from 512 to 16k tokens (in increments of $ 2 \times $ per experiment).
Across all experiments, the training batch size is fixed at 0.5M tokens, irrespective of sequence length.
We compare our method against the \method{Transformer}++ baseline~\citep{touvron2023llama} as well as several state-of-the-art sub-quadratic sequence models: 
\method{Linear-Attention~(LA)}~\citep{katharopoulos2020transformers}, 
\method{TTT}~\citep{sun2024ttt},
\method{DeltaNet}~\citep{yang2024parallelizing},
\method{Gated~DeltaNet}~\citep{yang2024gatedDeltaNet},
\method{Mamba2}~\cite{mamba2} and \method{GLA}~\citep{zhang2024gated}.
Detailed experimental settings, including a training throughput comparison (\autoref{fig:throughput}), are provided in Appendix~\ref{apdx:exp-details}.

The results in \autoref{fig:scalling_pattern} (and \autoref{fig:contextlength_Pile_Books}) demonstrate that \model{} consistently achieves superior perplexity compared to all baselines across a range of context lengths.
Importantly, the performance gains of \model{} relative to other linear RNNs become more pronounced as the sequence length grows. 
This trend highlights the effectiveness of the proposed approach for long-context modeling.
Furthermore, \autoref{fig:scalling_pattern} illustrates the scaling pattern with respect to increasing model size, showing that \model achieves lower perplexity than the baselines across different parameter scales.

\paragraph{Common-sense reasoning.}
In \autoref{tab:FineWeb_340M_760_tasks}, we report the zero-shot accuracy of the trained models on various standard commonsense reasoning benchmarks—including 
PIQA~\citep{bisk2020piqa}, HellaSwag (Hella.)~\citep{zellers2019hellaswag}, WinoGrande (Wino.)~\citep{sakaguchi2021winogrande}, ARC-easy (ARC-e) and ARC-challenge (Arc-c)~\citep{clark2018think}, and Commonsense QA (CSQA)~\citep{talmor2018commonsenseqa},
commonly used for LM benchmarking~\citep{zhang2024gated, yang2024parallelizing}. As the results demonstrate, \model outperforms the baseline models on most of these tasks, achieving the highest average accuracy.

Additionally, to assess memory capacity and in-context learning capabilities, we evaluate the model on the Multi-Query Associative Recall (MQAR) task~\citep{arora2023zoology} in \autoref{apdx:exp-details}. \model demonstrates significant improvements in recall accuracy, particularly as sequence length increases, highlighting the efficiency of its compression mechanism in retaining dense information within a fixed-size recurrent memory.

\begin{table*}
\centering
\begin{minipage}[t]{.68\textwidth}
  \centering
    \centering
\footnotesize
\caption{ Performance comparison on LM and zero-shot common-sense reasoning tasks. Models are trained on \method{FineWeb-Edu} dataset.
\label{tab:FineWeb_340M_760_tasks}
}
\resizebox{0.99\linewidth}{!}{
\setlength{\tabcolsep}{3.5pt}
\renewcommand{\arraystretch}{1.3}
\begin{tabular}{ l r r r r r r r r}
\toprule
\quad\quad\textbf{Model}  & \textbf{PIQA} &    \textbf{Hella.} & \textbf{Wino.} & \textbf{ARC-e} &  \textbf{ARC-c}  & {\textbf{CSQA}} & {\textbf{BoolQ}} & \textbf{Avg.}\\
&  acc $\uparrow$  & acc\_n $\uparrow$ &   acc $\uparrow$  & acc $\uparrow$  & acc\_n $\uparrow$ &  acc $\uparrow$  &  acc $\uparrow$ &  \\
\midrule
\midrule
\multicolumn{9}{l}{\textsl{340M params~/~15B tokens}} \\
\quad \method{Transformer}++ & 66.76 & 40.40 & \underline{52.38} & 49.47 & 27.04 & 33.01 & 58.93 & 46.93  \\
\quad \method{GLA}           & 67.52 & 42.10 & 52.09 & 53.11 & 29.36 & 36.77 & 59.79 & 48.68 \\
\quad \method{Mamba2}        & 67.08 & 41.30 & 52.25 & 51.63 & 29.19 & 36.28 & 60.98 & 48.39 \\
\quad \method{DeltaNet}      & 66.21 & 40.80 & \textbf{52.64} & 50.74 & 27.73 & 36.20 & \textbf{62.05} & 48.05 \\
\quad \method{TTT}      & 66.59 & 41.10 & 51.86 & 52.14 & 26.67 & 35.71 & 60.89 & 47.88  \\
\quad \method{Gated-DeltaNet}& \underline{68.12} & 42.93 & 52.33 & {53.19} & \underline{29.44} & 36.45 & 57.74 & 48.60 \\
\arrayrulecolor{black!30}\midrule
\quad \method{\model-DEC}~(\ref{eq:decoding_recurrence}) C=1  & \textbf{68.28} & \textbf{43.33} & 51.70 & 53.66 & 28.41 & \underline{37.27} & 60.70 & \underline{49.05} \\
\quad \method{\model-DEC}~(\ref{eq:decoding_recurrence}) C=4  & 67.25 & \underline{43.20} & 51.93 & \textbf{54.33} & \textbf{29.70} & 36.12 & \underline{61.44} & \textbf{49.14} \\
\quad \method{\model-DOT}~(\ref{eq:similarity_recurrence}) C=4  & 67.95 & 43.13 & 51.07 & \underline{53.91} & 28.50 & \textbf{38.33} & 57.71 & 48.66 \\
\arrayrulecolor{black}\midrule
\multicolumn{9}{l}{\textsl{760M params~/~30B tokens}} \\
\quad \method{Transformer}++ & 69.59 & 51.13 & 53.83 & 56.96 & 30.30 & 39.64 & \textbf{62.11} & 51.94 \\
\quad \method{GLA} & 69.64 & \underline{52.17} & 53.20 & 59.07 & 32.62 & 41.20 & 60.12 & 52.57 \\
\quad \method{Mamba2}      & 70.18 & 50.67 & 52.64 & 59.28 & 33.99 & 41.77 & 57.19 & 52.25 \\
\quad \method{DeltaNet}      & 69.86 & 48.60 & 51.85 & 58.10 & 32.36 & 40.05 &  58.56 & 51.34 \\
\quad \method{TTT} & 70.08 & 50.67 & 52.09 & 58.65 & 33.73 & 41.28 & 60.43  & 52.42 \\
\quad \method{Gated-DeltaNet} & \underline{71.38} & 51.87 & \underline{53.99} & \textbf{61.06} & 34.08 & 39.97 & 56.18 & 52.64 \\
\arrayrulecolor{black!30}\midrule
\quad \method{\model-DEC}~(\ref{eq:decoding_recurrence}) & 71.00 & \textbf{52.93} & 53.28 & \underline{59.66} & \underline{34.68} & \textbf{43.90} & 59.33 & \underline{53.54}\\
\quad \method{\model-DOT}~(\ref{eq:similarity_recurrence}) & \textbf{71.76} & 51.70 & \textbf{54.38} & 59.49 & \textbf{36.57} & \underline{43.57} & \underline{60.73} & \textbf{54.03} \\
\arrayrulecolor{black}\bottomrule
\end{tabular}
}

\end{minipage}~\hfill{}~
\begin{minipage}[t]{.31\textwidth}
  \centering
    \footnotesize
\caption{ \label{tab:ablation}
Ablation study of \model{}'s components (110M parameters, trained on \method{FineWeb-Edu}). Each row starting with \textit{+} adds a new component to the configuration in the row above it, beginning from the DeltaNet baseline. The final row with \textit{+} represents the full \model{} configuration.
Unless otherwise stated, the memory size is set to $m\!=\!d\!=\!64$ in all models. 
The table also includes comparisons of different parallel approximations ($C\!=\!4$) and evaluates the impact of memory size $m$ on the compression layer ($C\!=\!1$). 
}
\vskip -5pt
\begin{adjustbox}{width=.98\linewidth}
\setlength{\tabcolsep}{4pt}
\renewcommand{\arraystretch}{1.3}
\centering
\begin{tabular}{lc}
        \toprule
        \method{\textbf{Configuration}} & ppl $\downarrow$ \\
        \midrule
        \method{DeltaNet} & 16.35  \\    
        \method{TTT~C\!=\!1} & 15.60  \\        
        \arrayrulecolor{black!30}\midrule
        \method{\model~C\!=\!1} \\
        \quad + \method{orthogonal~recurrence} (\ref{eq:decoding_recurrence})  &   15.38  \\
        \quad + \method{normalized~projection} (\ref{eqn:normalized_recurrence}) &  15.07  \\
        \quad + \method{forget-gate} (\ref{eq:dacay_recurrence}) &  15.02  \\
        \arrayrulecolor{black!30}\midrule
        \method{\model~C\!=\!4} approx.~\eq{eq:recurrence_linear1}& 15.15 \\
        \method{\model~C\!=\!4} approx.~\eq{eq:recurrence_linear2}&  15.12 \\
        \arrayrulecolor{black!30}\midrule
        \method{{\model~m\!=\!16}}&  15.52 \\
        \method{{\model~m\!=\!32}}&  15.26 \\
        \method{{\model~m\!=\!128}}&  14.84 \\
        \method{{\model~m\!=\!192}}&  14.71 \\

    \arrayrulecolor{black}\bottomrule
\end{tabular}
\end{adjustbox}

\end{minipage}
\end{table*}

\paragraph{Ablation.}
We ablate key components of the \model to evaluate the contribution of each to the overall performance. 
The results in \autoref{tab:ablation} underscore the significance of orthogonal state recurrence (\ref{eq:similarity_recurrence}) and normalized projection (\ref{eqn:normalized_recurrence}) to the model's overall performance. 
Furthermore, our analysis indicates that the explicit forget gate's impact on the overall performance is negligible, suggesting that the normalized projection introduced in \eq{eqn:normalized_recurrence} inherently acts as an effective forgetting mechanism. 
Finally, \model with only a quarter of the memory slots ($m=16$) outperforms \method{TTT} with $m=d=64$, validating that the proposed orthogonal update mechanism utilizes the memory capacity significantly more efficiently.
Extended ablation studies on scaling memory size and per-token vs. per-chunk normalization are presented in \autoref{apdx:exp-details}.

\section{Conclusion}
This work introduced a novel recurrent neural network mechanism designed for efficient information compression into a matrix-valued state with a limited number of memory slots. 
We approached this problem by framing it as an online optimization problem, deriving the memory's dynamic update rule from a single gradient descent step.
The resulting recurrence features a state- and input-dependent gating mechanism, leading to an interpretable memory update process.  
A core feature of this mechanism is that each memory slot is updated exclusively with information that is orthogonal to its current state. This orthogonal update ensures that only new, non-redundant data is written into memory and minimizes the interference with previously stored information.
Furthermore, the update includes an input- and state-dependent writing intensity, providing fine-grained control over the magnitude of the information written to each memory slot.
With its sub-quadratic complexity, this mechanism offers a promising alternative to Transformers for pre-training or a method for efficiently fine-tuning pre-trained Transformers into RNNs.

\bibliography{references}
\bibliographystyle{plainnat}%

\clearpage
\appendix
\onecolumn

\section{Discussion and Related Works} \label{apdx:related}

\paragraph{Fast Weight Programmers and Test-Time Training.}
The two-stage learning process adopted in our work draws inspiration from the concept of Fast Weight Programmers (FWPs)~\citep{schmidhuber1992learning, schlag2021linear} where a "slow" network dynamically updates the parameters of a "fast" network. 
In our framework, the compression layer in the inner loop can be seen as the fast network, with its memory states,  $\mathbf{S}_t$, acting as "fast weights" that are rapidly adapted to the evolving contextual information. 
The outer loop, conversely, learns the generalizable parameters of the slow neural network, optimized across the entire training dataset.
The continual reprogramming of fast network weights by slow models~\citep{irie2021going,clark2022meta} is broadly recognized as Fast Weight Programming, also referred to as synaptic modulation~\citep{von1994correlation} or input-dependent parameterization~\citep{karami2019invertible, gu2023mamba,karami2024orchid}, a technique known to enhance model expressiveness. 
In our architecture, the parameterization of the linear projections by the slow network facilitates this fast adaptation within the inner loop.
Similarly, \textit{Test-Time Training}~\citep{sun2020test,sun2024ttt,titans2024,von2025mesanet} is a paradigm where a model adapts to each test instance by optimizing a self-supervised objective before making predictions. Our compression layer effectively implements a form of test-time training by dynamically updating its state based on the contextual information of the input sequence during inference. 
In contrast to the aforementioned works, our approach introduces an explicit learning mechanism for the "fast" compression layer, leading to an interpretable update rule for its internal states that optimally compresses the latest token into memory at test time.

Test-time training for sequence models have recently been formalized through the lens of online optimization.
The works such as  
~\cite{sun2024ttt, liu2024longhorn, titans2024, behrouz2025atlas, wang2025testTimeRegression, karami2025trellis, von2025mesanet, zhang2025TTTDR} fall under this category. They have demonstrated that deriving recurrent update rules from the online optimization of a regression objective can yield powerful sequence models.

\paragraph{Adaptive Filters.}
Classical adaptive filtering algorithms~\citep{haykin2002adaptive} iteratively update their weights to minimize prediction error while efficiently adapting to streaming, non-stationary data. These methods share core principles with the online learning and dynamic memory updates employed in our work. In particular, the gradient descent-based update rules we adopted for memory adaptation are closely related to the Least Mean Squares (LMS) algorithm—also known as the Widrow-Hoff algorithm~\citep{widrow1988adaptive}—which updates weights using the instantaneous gradient of the squared error.  
Furthermore, variations such as Normalized Least Mean Squares (NLMS), which involves a normalized step size for improved convergence, and Leaky LMS, which incorporates a leakage factor used to prevent unbounded growth of filter weights, find parallels in our use of normalization mechanisms to stabilize memory update (\autoref{eqn:normalized_recurrence}) and state decay (\autoref{eq:dacay_recurrence}). 
While these adaptive filtering methods rely on linear weight updates, our approach introduces a non-linear memory update rule that incorporates only the non-redundant components of the new token.

\paragraph{Matrix Factorization}
Matrix factorization and dictionary learning are classical representation learning techniques that aim to extract essential features from complex data by approximating it as a linear combination of a reduced set of basis vectors, also known as dictionary atoms. This concept is also conceptually related to topic modeling, where the objective is to extract important features (topics) from a complex dataset to obtain a reduced representation~\citep{blei2009topic, blei2012probabilistic}.
\citet{mairal2010online} proposed an online optimization algorithm for structured matrix factorization and sparse coding for i.i.d. stream of data, which efficiently scales to large datasets.
Subsequently, \citet{lyu2020online} extended this work by proving the convergence of such an online algorithm in non-i.i.d. settings, where the sequential data forms a Markov chain.
In a related area, \citet{karami2017multi} formulated the identification of SSMs (a.k.a. linear dynamical systems) as a multi-view matrix factorization problem and proposed a convex optimizer for its solution. 
In contrast to  the online matrix factorization in~\cite{mairal2009online}, which employs a model-free method to learn the latent coefficients (codes) and leverages block coordinate descent for optimization, our method formulates the memory update as a fast internal optimization procedure. We incorporate a simple encoding layer to generate the latent representation, $\vrep_t$, and integrate it into a larger deep neural network training procedure.

\paragraph{Recent Advancements in RNNs.}
More broadly, the field is witnessing a resurgence of interest in RNN research, with many new architectures emerging as viable alternatives to Transformers. Works such as~\cite{sun2023retentive, lin2025forgettingTransformer, merrill2024illusion, kacham2024polysketchformer, peng2023rwkv, peng2025rwkv,  guo2025log, du2025mom, siems2025deltaproduct, hu2025comba, karami2025MSSSM, behrouz2025atlas,zhang2025TTTDR} tackle the limitations of RNNs, introducing new mechanisms to improve long-range dependency, handling scalability, or efficient parallelizability.

\begin{remark2}[\textbf{Delta Rule}]
Removing the non-linearity $\phi(\cdot)$ from the compression layer simplifies the online gradient descent update rule in (\ref{eqn:OGD}) to
\begin{align} \label{eqn:DeltaNet}
    \mathbf{S}_{\nxt} &= \mathbf{S}_{\cur} -  \gamma_t
    (\mathbf{S}_{\cur} {\vrep}_{t} - {\vv_t}) \vrep_t^\top  
    = \mathbf{S}_{\cur} (\mathbf{I} - \gamma_t {\vrep}_{t} \vrep_t^\top ) +  \gamma_t {\vv_t} \vrep_t^\top 
\end{align}
This linear update rule recovers the delta rule~\citep{widrow1988adaptive}, known for its higher memory capacity~\citep{prados1989neural} and has been demonstrated as an effective form of linear recurrence, particularly in associative recall tasks~\citep{schlag2021linear, yang2024parallelizing}.
Similar to linear transformers, the second term writes into memory via the outer product ${\vv_t} \vrep_t^\top$, while the first term implements a forgetting mechanism, controlled by the new key $\vrep_t$, to remove old information from memory.
Here, we propose a more efficient update rule based on the nonlinear interactions between the memory and the non-redundant information of the new keys.
\end{remark2}

\begin{remark2}[Layer Normalization]
In the proposed compression layers,  normalization was applied to each state column.
Alternatively, \citet{sun2024ttt} proposed applying normalization on the output of the decoding layer.\footnote{While the general formulation of \method{TTT}~\citep{sun2024ttt} applies a non-linearity to $\vz_t$, their implementation specifically utilizes normalization.}
In this case, the decoding function becomes: 
$\hat{\vv_t} = g({\vrep}_t; \mathbf{S}_t) = \phi(\mathbf{S}_t {\vrep}_t)$.
This formulation is analogous to applying \emph{layer normalization} as commonly used in deep neural networks.
As before, we can simplify the gradient $\nabla_{\mathbf{S}} \mathcal{L}_t$ to derive an interpretable update rule. Specifically, let $\phi(\vz_t)= \frac{\vz_t}{\|\vz_t \|}$, where $\vz_t :=  \mathbf{S}_{\cur} {\vrep}_{t}$, and define the reconstruction error as $\ve_t := \hat{\vv_t} - {\vv_t}$. Applying the chain rule, we obtain:
\begin{align}
\frac{\partial \mathcal{L}_t}{\partial \mathbf{S}} = \ve_t^\top \, \mathbf{J}_\phi (\vz_t)
\vrep_t^\top, 
\text{ where } 
\mathbf{J}_\phi  (\vz_t) =  \frac{\mathbf{P}({\vz}_t)}{\|\vz_t\|} = \frac{1}{\|\vz_t\|} \left( 
\mathbf{I} -
\frac{\vz_t {\vz}_t^\top}{\|\vz_t\|^2}  \right) \nonumber
\end{align}
Subsequently, the gradient descent update follows a nonlinear recurrence:
\begin{align}
\mathbf{S}_{\nxt} 
= \mathbf{S}_{\cur} - \gamma_t \ve_t^\top \, \frac{\mathbf{P}({\vz}_t)}{\|{\vz}_t\|}
\vrep_t^\top
\end{align}
This nonlinear state recurrence incorporates an outer-product correction based on the projection of the reconstruction error $\ve_t$ onto the orthogonal complement space of $ \hat{\vv}_t$.

The concept of normalizing the state vectors in our compression model, as described in \S \ref{sec:dec}, shares similarities with weight normalization techniques used in deep learning literature~\citep{salimans2016weight}.
Furthermore, the two interpretations presented above—applying normalization to the output of the decoding layer versus normalizing the state vectors—offer insights into the rationale behind different normalization schemes commonly used in deep learning, such as weight normalization and layer normalization~\citep{ba2016layer}.
Each normalization method plays a distinct role in stabilizing training and improving generalization.
\end{remark2}

\todoM[]{Clarify}

\section{Detailed Derivations and Model Variants} \label{apdx:details}

\subsection{Parallel and Hardware Efficient Form} \label{app:parallel_details}
Various methods have been explored to enable parallel evaluation of non-linear RNNs. One strategy, as proposed by \citet{lim2023parallelizing,gonzalez2024towards}, involves casting inference as finding the solution to a fixed-point equation, thereby achieving parallelism.
 In a different approach, \citet{sun2024ttt} introduced a parallel chunk-wise solution using a gradient approximation. This method splits a sequence into non-overlapping chunks and utilizes the state at the beginning of each chunk to approximate the gradients for the entire chunk in parallel. 
Following this technique, we let  $\mathbf{S}_{t'}$ represent the state at the beginning of the chunk (i.e., the final state from the preceding chunk), where $t' \!=\! t - \op{mod}(t, C)$ with $C$ denoting the chunk size. The gradient is then approximated as 
$\nabla_{S} \mathcal{L}(\mathbf{S}_{t'}, \vv_t, \vrep_t)$.This approximation linearizes the nonlinear recurrence \eq{eq:recurrence_detailed} as:
\begin{align} \label{eq:recurrence_linear1_apdx}
\mathbf{S}_{\nxt}  &=  
    \mathbf{G}_t \! \odot \!\mathbf{S}_{\cur} 
    + \vone (\tilde{\vh}_t \! \odot \! \hat{\vk}_t)^\top \! \odot \!\mathbf{S}_{t'}
    - \vh_t \hat{\vk}_t^\top,
\end{align}
where $\tilde{\vh}_t \!=\! \mathbf{S}_{t'}^\top \vh_t $.
Now, for the time steps $t\!=\! b C \!+\! \tau$ in the $b$-th block, let $\mathbf{X}^b = \vx[bC \!+\! 1 \!:\! b(C \!+\! 1)]$ denote the stacked input into the chunk-wise matrices and $ \mathbf{X}_{\tau}^b \!=\! \vx[bC+\tau]$
(similarly for other vectors such as $\vq, \vh, \vk$),
and define the local cumulative product of decay factors  as 
$\va_\tau^b = \prod_{i = b C+1}^{b C+ \tau} \vbeta_i \mu_i$ .
We also define a block lower triangular tensor $\MOmega^b \in \mathbb{R}^{C \times C \times m}$ with components $\MOmega^b_{j,i,:} = \frac{\va_j^b}{\va_i^b}  \mathbb{I}_{i \le j}$ that are segmented cumulative product over the sub-block steps $i$ to $j$ ($1\le i\le j\le C$). Here $\mathbb{I}_{i \le j}$ is the indicator function (equal to 1 if $i \le j$ and 0 otherwise), the division $\frac{\va_j^b}{\va_i^b}$ is performed element-wise.
Therefore, the recurrence \eqref{eq:recurrence_linear1} can be expressed at the chunk-level as:
\begin{align} \label{eq:chunkwise_OSR_apdx}
\mathbf{S}_{b}  &=  
    \big(\vone ({\va_C^b} \!+\! {\vf^b})^\top \big)\! \odot \! \mathbf{S}_{b-1}
    - {\mH^b}^\top (\hat{\mK}^b \! \odot \! \MOmega^b_{C,:,:})) \\
    \mY^b &= \big(\mQ^b \! \odot \! (\mLambda^b \!+\! \mF^b) \big) \mathbf{S}_{b-1}^\top
    - \mP^b \mH^b  \nonumber
\end{align}
where $\vf^b \!=\! \op{diag}[\tilde{\mH}^b \, (\hat{\mK}^b \! \odot \! \MOmega^b_{C::})]  \in \mathbb{R}^{m}$ 
, and $\mF^b \in \mathbb{R}^{C \times m}$ is a matrix with entries
$\mF^b_{i n} \!=\! \sum_{\tau=1}^{C} (\tilde{\mH}^b \! \odot \! \hat{\mK}^b)_{\tau n} \, \MOmega^b_{i \tau n} ~\forall 0<i \le C,~ 0<j \le m$.
Both of these can be computed using \textit{tensor contraction} or Einstein summation operation as
\begin{align*}
\vf^b &\!=\! \sum_{\tau=1}^{C} \tilde{\mH}^b_{\tau:} \, (\hat{\mK}^b \! \odot \! \MOmega^b_{C::})_{\tau:} \!=\! \texttt{einsum}(\texttt{"C m, C m -> m"},~ \tilde{\mH}^b,~ (\hat{\mK}^b \! \odot \! \MOmega^b_{C::})) \\
\mF^b &\!=\! \texttt{einsum}(\texttt{"C Ci m, Ci m -> C m"},~ \! \MOmega^b,~ (\tilde{\mH}^b \! \odot \! \hat{\mK}^b))).
\end{align*}
Here, $\mP^b \in \mathbb{R}^{C \times C}$ is a lower triangular matrix with $\mP^b_{ij} = \sum_{k=1}^{m} \mQ^b_{ik} \, \hat{\mK}^b_{jk} \, \MOmega^b_{ijk}$. 
A \texttt{matmul}-optimal computation for $\mP^b$ is presented in \cite{zhang2024gated} using a sub-tiling technique.   
However, a closer inspection of \eq{eq:recurrence_detailed} reveals that the recurrence can be simplified by linearizing only the $\hat{\vh}_t$ term, leading to:
\begin{align} \label{eq:recurrence_linear2_apdx}
\mathbf{S}_{\nxt}  &=  
    \hat{\mathbf{G}}_t \! \odot \!\mathbf{S}_{\cur} 
    - \vh_t \hat{\vk}_t^\top,
    \text{~~ where } \hat{\mathbf{G}}_t = \vone  (\mu_t   \vbeta_t + \tilde{\vh}_t \! \odot \! \hat{\vk}_t) ^\top. 
\end{align}
Here, $\hat{\mathbf{G}}_t$ is parameterized as a rank-one outer product, and hence this intra-chunk update can be computed efficiently using the parallel form of gated linear attention (GLA) presented in \cite{zhang2024gated}.

\paragraph{Computing $\vbeta$.} 
For the normalization factor, $\vbeta_{\tau}$, in \eqref{eqn:normalized_recurrence}, we need to compute $\|\Delta \mathbf{s}_{i,\tau}\|^2 ~~ \forall~ 0< i \le m, ~ 0< \tau \le C$ which require materializing $\Delta \mS_{\tau} \in \mathbb{R}^{d \times m}~\forall ~ 0< \tau \le C $, that is memory and compute inefficient. However, we can see that these norms can be efficiently computed using matrix multiplication:
\begin{align*}
    \Delta \mS_{\tau} &= -\gamma_t \nabla_{S} \mathcal{L}(\mathbf{S}_{\cur}, \vv_t, \vrep_t) = \vone (\tilde{\vh}_t \! \odot \! \hat{\vk}_t)^\top \! \odot \!\mathbf{S}_{0}
    - \vh_t \hat{\vk}_t^\top \\
    \vbeta_{\tau} &= \left(1+ \op{diag}[\Delta \mS_{\tau}^\top  \, \Delta \mS_{\tau} ] \right)^{-\frac{1}{2}} \\
    \vd_{S_\tau} :=\op{diag}[\Delta \mS_{\tau}^\top  \, \Delta \mS_{\tau} ] & =
    \op{diag}[\mS_{0}^\top  \, \mS_{0} ] \! \odot \!  (\tilde{\vh}_t \! \odot \! \hat{\vk}_t)^2 + \|\vh_t \|^2  \hat{\vk}_t^2 
\end{align*}
We can also compute $\vd_{S_\tau} ~\forall ~ 0< \tau \le C$ for the entire chunk by matrix multiplications as 
\begin{align} \label{eq:beta_chunk}
    \mD_S := [\vd_{S_0}, \dots, \vd_{S_C}] &= \op{diag}[\mS_{0}^\top  \, \mS_{0} ] \! \odot \!  (\tilde{\mH} \! \odot \! \hat{\mK})^2 + \op{diag}[\mH \mH^\top ]  \hat{\mK}^2 ~ \in \mathbb{R}^{C \times m} \nonumber \\
    [\vbeta_0, \dots, \vbeta_C] &= \left(1+ \mD_S \right)^{-\frac{1}{2}}
\end{align}
where all the square  and square roots are performed element-wise.

This matrix form computes the states only at the end of each block massively save computation and I/O overhead, while enabling efficient use of the \texttt{matmul} operations on modern accelerators~\citep{hua2022transformer, kacham2024polysketchformer, mamba2, zhang2024gated, sun2023retentive}.

\subsection{Encoding layer} \label{sec:encoding}
Principal Component Analysis (PCA) can be formulated as a linear regression problem, where the data is projected onto a lower-dimensional latent space
~\citep[\S 5.8]{goodfellow2016deep}
Inspired by this regression perspective, we define a encoding layer as:
$\hat{\vrep}_t = f({\vv}_t; \mathbf{S}_t) = \phi(\mathbf{S}_t)^\top\, \vv_t $
with the corresponding $\ell_2$ loss:
\begin{align}
\mathcal{L}_t = \left\| \phi(\mathbf{S}_t)^\top\, \vv_t - \vk_t \right\|^2, ~~   \mathbf{S}_t \in \mathbb{R}^{d \times m}, ~ \vv_t \in \mathbb{R}^{d}, ~ \vrep_t \in \mathbb{R}^{m}
\label{eq:encoding_stateNorm_loss}
\end{align}
From this, we can derive a closed-form expression for the gradient, resulting in the recurrence:
\begin{align}
\mathbf{S}_{\nxt} = \mathbf{S}_{\cur} - \gamma_t \vv_t^\top
\times_1
\begin{bmatrix} 
  \frac{\mathbf{P}({\vs}_1)}{\|\mathbf{s}_1\|}, \dots,     \frac{\mathbf{P}({\vs}_m)}{\|\mathbf{s}_m\|} 
\end{bmatrix} \odot \ve_t^\top
\label{eq:encoding_recurrence}
\end{align}

\paragraph{General form.}
The formulations presented in Eqs.~(\ref{eq:decoding_recurrence}, \ref{eq:encoding_recurrence}, and \ref{eq:similarity_recurrence}) offer principled approaches for designing compression layers in our framework. 
In general, we refer to this update rule as \emph{Orthogonal State Recurrence (OSR)},
which unifies compression layer into a common framework formulated as follows
\begin{align} \label{eq:SDC_general_stateNorm}
\{\vy_t\}_{t=1}^T &= 
\op{\model}
(\{\vk_t, \vv_t, \vq_t\}_{t=1}^T)   
:= 
\begin{cases}
    \mathbf{S}_{\nxt} = \mathbf{S}_{\cur} - \gamma_t \vh_t^\top
    \times_1
    \begin{bmatrix} 
        \frac{\mathbf{P}({\vs}_1)}{\|\vs_1\|}, \dots,     \frac{\mathbf{P}({\vs}_m)}{\|\vs_m\|} 
    \end{bmatrix} \odot \vc_t^\top  
    \\
    \vy_t = \mathbf{S}_{\nxt} {\vq}_{t}   
\end{cases} 
\end{align}
Here, the definitions of  $\vh_t$ and $\vc_t$ vary depending on the specific layer:
$$\begin{cases}
\{\vh_t = \ve_t, \quad \vc_t = \vrep_t\}, & \text{Decoding Layer} \\
\{\vh_t = \vv_t, \quad \vc_t = \ve_t\}, & \text{Encoding Layer} \\
\{\vh_t = -\vv_t, \quad \vc_t = \vrep_t\}, & \text{Similarity Objective}
\end{cases}$$
Table \ref{tbl:ogd-list}  provides a summary comparing the online gradient descent-based recurrent corresponding to the proposed compression layers and those of existing RNNs.

\subsection{Proofs} \label{apdx:proofs}

\subsubsection{Encoder Layer with State Normalization}

The reconstruction loss in this case is

\[
\mathcal{L}_t = \left\| \phi(\mathbf{S}_t)^\top\, \vv_t - \vk_t \right\|^2,
\]
where
$\mathbf{S}_t \in \mathbb{R}^{d \times m}$ with columns $\vs_i$ (for $i=1,\dots, m$), $\vv_t \in \mathbb{R}^d, ~\vk_t \in \mathbb{R}^{m}$
and $\phi(\mathbf{S}_t) = [\vphi_1, \ldots, \vphi_m]$ is obtained by normalizing each column of $\mathbf{S}_t$; that is, 
$ \vphi_i = \frac{\vs_i}{\|\vs_i\|}$.
Decomposing the reconstruction error in a per-basis (per-column) form and defining the reconstruction error,
\[
\ve_i := \vphi_i^\top\, \vv_t - (\vk_t)_i ~ \forall~ i=1,\dots, m
\]
Then the loss is $\mathcal{L}_t = \sum_{i=1}^m \ve_i^2$.
By this decomposition, we can derive the gradient with respect to each column $\vs_i$ separately:
\[
\frac{\partial \mathcal{L}_t}{\partial \vs_i} = 2\, \ve_i \, \frac{\partial \ve_i}{\partial \vs_i}.
\]
The Jacobian of the normalized vector
$
\vphi_i = \frac{\vs_i}{\|\vs_i\|},
$
is
\[
\nabla_{\vs_i} \vphi_i = \frac{1}{\|\vs_i\|} \left( \mathbf{I}_d - \vphi_i \vphi_i^\top \right).
\]
Thus, by the chain rule,
\[
\frac{\partial (\vphi_i^\top\, \vv_t)}{\partial \vs_i} =  \frac{\partial \vphi_i}{\partial \vs_i}\, \vv_t = \frac{1}{\|\vs_i\|}  \left( \mathbf{I}_d - \vphi_i \vphi_i^\top \right) \, \vv_t.
\]
Therefore, for each $i$ the gradient with respect to $\vs_i$ is
\[
\frac{\partial \mathcal{L}_t}{\partial \vs_i} 
= \frac{2\, \ve_i }{\|\vs_i\|} \mathbf{P}({\vs}_i) \, \vv_t
= 2\, \ve_i\, \frac{1}{\|\vs_i\|} \left( \vv_t -  \frac{\vs_i ~ (\vs_i^\top \vv_t)}{\|\vs_i\|^2}  \right),\quad \text{for } i=1,\dots, m. 
\]
Here, the matrix
$\mathbf{P}({\vs}_i) = \mathbf{P}({\vphi}_i) := \left( 
\mathbf{I} -
\frac{\vs_t {\vs}_i^\top}{\|\vs_i\|^2}  \right)  $ 
is known as the \textit{projection matrix onto the orthogonal complement} of \( {\vs}_i\) in linear algebra~\citep[\S 3.3]{strang2000linearAlgebra}. 
Stacking these column gradients into the gradient with respect to the matrix $\mathbf{S}$
we obtain
\[
\nabla_{\mathbf{S}} \mathcal{L}_t = \left[ \frac{\partial \mathcal{L}_t}{\partial \vs_1},\, \frac{\partial \mathcal{L}_t}{\partial \vs_2},\, \dots,\, \frac{\partial \mathcal{L}_t}{\partial \vs_m} \right],
\]

Thus, the closed-form gradient can be expressed as:
\begin{align}
\nabla_{\mathbf{S}} \mathcal{L}_t &= \left[ \frac{2 \Bigl( \vphi_1^\top\, \vv_t - (\vk_t)_1 \Bigr)}{\|\vs_1\|} \left( \mathbf{I}_d - \vphi_1 \vphi_1^\top \right) \vv_t,\;\cdots,\; \frac{2 \Bigl( \vphi_m^\top\, \vv_t - (\vk_t)_m \Bigr)}{\|\vs_m\|} \left( \mathbf{I}_d - \vphi_m \vphi_m^\top \right) \vv_t \right] \\
&= \left[ \frac{2\, \ve_1 }{\|\vs_1\|} \mathbf{P}({\vs}_1) \, \vv_t,\;\cdots,\; \frac{2\, \ve_m }{\|\vs_m\|} \mathbf{P}({\vs}_m) \, \vv_t \right] 
\tag*{\hfill$\square$}
\end{align}

\begin{proof}
[Proof of Proposition~\ref{thm:Riemannian}]

Consider the unit sphere  
$\mathcal{C} = \{ \vs \in \R^d \mid \|\vs\| = 1 \}$ which is a which is a smooth Riemannian manifold. 
Let 
 $\nabla_{s} \ell(\mathbf{s})$, the gradient of the loss $\ell$, and let $\nabla_{\mathcal{C}} \ell(\mathbf{s})$ be its orthogonal projection from the ambient space $\R^d$ onto the tangent space of the manifold at $\vs$, denoted by $\mathcal{T}_{\mathcal{C}}(\vs)$.

In our update rule, the gradient term
$
\Delta \vs = \alpha\, \mathbf{h}^{\perp\, \vs}
$
is constructed such that it is orthogonal to $\vs$; that is, $\Delta \vs \in \mathcal{T}_{\mathcal{C}}(\vs) $. Hence, we have
\[
\nabla_{\mathcal{C}} \ell(\mathbf{s}) = \nabla_{s} \ell(\mathbf{s})
\]

The gradient descent update of $\ell$ on the Riemannian  manifold $\mathcal{C}$ is given by
\[
\mathbf{s}_{\text{new}} = \exp_{\mathbf{s}}\Bigl(-\eta_t\, \nabla_{\mathcal{C}} \ell(\mathbf{s})\Bigr),
\]
where \(\exp_{\mathbf{s}}\) is the exponential map on the sphere $\mathcal{C}$~\citep{absil2009optimization, boumal2023introduction}.

Replacing the exponential map with its first-order approximation, called retraction step, which projects from the tangent space onto the sphere manifold~\citep{bonnabel2013stochastic}. 
Therefore, our projected gradient update of the form $\vs_{i,\nxt} = \mathcal{P}_{\mathcal{C}}(\vs_{i,\cur} + \Delta \vs_{i,t})$ (\eqref{eqn:normalized_recurrence})
is equivalent to performing a Riemannian gradient descent step with retraction on the manifold $\mathcal{C}$.   
 $\square$
\end{proof}

\begin{figure}[t]
\centering
\begin{adjustbox}{width=.45\linewidth}
    \begin{tikzpicture}[scale=1.2, every node/.style={font=\small}]

  \draw[->] (-.5,0) -- (3,0) node[right] {$x$};
  \draw[->] (0,-1.) -- (0,3) node[above] {$y$};

  \coordinate (O) at (0,0);
  \coordinate (s) at (1,2);
  \coordinate (v) at (1,.5);
  \coordinate (vproj) at (.6, -0.3);
  
  \coordinate (snext) at (1.6, 1.7);
  
  \coordinate (snext0) at (3., 3.);
  
  \draw[->,  very thick, red] (O) -- (s) node[midway, above left] {$\vs_{\cur}$};
  
  \draw[->, very thick, blue] (O) -- (v) node[below ] {$\vh_t$};
  
  \draw[->, very thick, green!70!black] (O) -- (vproj) node[below] {$\vh_t^{\perp {\vs}_{\cur}} = \mathbf{P}({\vs}_{\cur}) \, \vh_t$};
  
  \draw[dashed] (v) -- (vproj);
  
  \draw[dashed, orange, domain=-.5:3] plot (\x, {-0.5*\x}) node[below] {
  \tiny Orth. complement 
  space of $\vs_{\cur}$
  };

\node at (1,-1.8) {\captiona};

\begin{scope}[xshift=4cm]

  \draw[->] (-.5,0) -- (3,0) node[right] {$x$};
  \draw[->] (0,-.5) -- (0,3) node[above] {$y$};

  \coordinate (O) at (0,0);
  \coordinate (s) at (1,2);
  \coordinate (v) at (1,.5);
  \coordinate (vproj) at (.6, -0.3);
  
  \coordinate (snext) at (1.6, 1.7);
  
  \coordinate (snext0) at (2., 2.5);
  
  \draw[->,  very thick, red] (O) -- (s) node[midway, above left] {$\vs_{\cur}$};

  \draw[->, very thick, green!70!black] (O) -- (vproj) node[below] {$\vh_t^{\perp {\vs}_{\cur}} = \mathbf{P}({\vs}_{\cur}) \, \vh_t$};
  
  \draw[dashed] (snext) -- (vproj);

  \draw[dashed] (snext) -- (s);

  \draw[->, very thick, purple] (O) -- (snext) node[above] {$\vs_{\nxt}$};
    
  \draw[->, very thick, dashed, violet] (O) -- (snext0) node[above] {$\hat{\vs}_{\nxt}$};
\node at (1.5,-1.8) 
{\captionb};

\end{scope}

\end{tikzpicture}
\end{adjustbox}
\begin{adjustbox}{width=.45\linewidth}
    \begin{tikzpicture}[scale=1.2, every node/.style={font=\small}]

  \draw[->] (-.5,0) -- (3,0) node[right] {$x$};
  \draw[->] (0,-1.5) -- (0,2) node[above] {$y$};

  \coordinate (O) at (0,0);
  \coordinate (s) at (1,2);
  \coordinate (v) at ( 1.5, -2.);
  \coordinate (vproj) at (2., -1.);
  
  \coordinate (snext) at (3., 1.);
  
  \coordinate (snext0) at (2.5, 0.);
  
  \draw[->,  very thick, red] (O) -- (s) node[midway, above left] {$\vs_{\cur}$};
  
  \draw[->, very thick, blue] (O) -- (v) node[below ] {$\vh_t$};
  
  \draw[->, very thick, green!70!black] (O) -- (vproj) node[above right] {$\vh_t^{\perp {\vs}_{\cur}} = \mathbf{P}\vh_t$};
  
  \draw[dashed] (v) -- (vproj);
  
  \draw[dashed, orange, domain=-.5:3] plot (\x, {-0.5*\x}) node[below] {
    \tiny Orth. complement 
  space of $\vs_{\cur}$
  };

\node at (1,-2.2){\captiona};

\begin{scope}[xshift=4cm]

  \draw[->] (-.5,0) -- (3,0) node[right] {$x$};
  \draw[->] (0,-1.5) -- (0,2) node[above] {$y$};

  \coordinate (O) at (0,0);
  \coordinate (s) at (1,2);
  \coordinate (v) at ( 1.5, -2.);
  \coordinate (vproj) at (2., -1.);
  
  \coordinate (snext) at (3., 1.);
  
  \coordinate (snext0) at (2.5, 0.);
  
  \draw[->,  very thick, red] (O) -- (s) node[midway, above left] {$\vs_{\cur}$};

  \draw[->, very thick, green!70!black] (O) -- (vproj) node[below] {$\vh_t^{\perp {\vs}_{\cur}} = \mathbf{P}\vh_t$};
  
  \draw[dashed] (snext) -- (vproj);

  \draw[dashed] (snext) -- (s);

  \draw[->, very thick, purple] (O) -- (snext) node[above] {$\vs_{\nxt}$};
    
  \draw[->, very thick, dashed, violet] (O) -- (snext0) node[above] {$\hat{\vs}_{\nxt}$};
\node at (1.5,-2.2) 
{\captionb};

\end{scope}

\end{tikzpicture}
\end{adjustbox}
\caption{  \label{fig:ortho_schemas2}
An illustration of the proposed update rule. 
(a) Example of a single memory slot state, $\vs_t$, an incoming token representation, $\vh_t$, and its component orthogonal to the current state, $\vh_t^{\perp {\vs}_{\cur}}$.
(b) The updated state according to the proposed update rule, 
$\vs_{\nxt}= \vs_{\cur} + \alpha_{i,t} \, \vh_t^{\perp {\vs}_{\cur}}$ 
contrasted with the updated state resulting from the superposition recurrence update used in standard linear attention: $\hat{\vs}_{\nxt}= \vs_{\cur} + \alpha_{i,t} \, \vh_t$, (dashed arrow).
A unit writing intensity ($\alpha_{i,t}=1$) is assumed for simplicity in both recurrent update rules.
}
\end{figure}

\begin{highlightbox}
This proposition formalizes that by updating the memory slot with only the orthogonal component and then projecting back onto the unit sphere, we are effectively performing gradient descent on the Riemannian manifold of unit-norm state vectors.
\end{highlightbox}

\section{Experiment Details} \label{apdx:exp-details}

\textbf{Datasets:} 
We trained models on different datasets.  For language modeling and common-sense reasoning tasks, models are trained on FineWeb-Edu dataset~\citep{penedo2024fineweb} with context length of 4k. 
For sequence length scaling pattern models are trained on \method{The~Pile}  and \method{Books3} dataset.
\method{The~Pile} is a large-scale, diverse corpus widely used for training and evaluating language models~\citep{gao2020pile}. It consists of a mixture of high-quality text sources, including books, academic papers, web content, and technical documentation. 
While it contains relatively few sequences exceeding $8k$ tokens, in this study, we restrict The Pile to a short-context setting with sequence lengths of 2k or 8k tokens.
\method{Books3}, on the other hand,  is a subset of The Pile that consists of high-quality, full-length books, commonly used  for training language models for long-context evaluations. In the experiments we used this dataset  to test model performance on sequences ranging from 512 to 16k tokens (in increments of $ 2 \times $ per experiment). The same training setup as The Pile is applied to ensure consistency. Since Books3 contains structured narratives and long-form content, it provides a rigorous test of a model’s ability to track dependencies over extended contexts. 

For all experiments, the training batch size is fixed at 0.5 million tokens, irrespective of sequence length. This means that for a given context length $ T $, each batch contains $ 0.5M / T $ sequences.

\paragraph{Baseline Models and Model Architecture}  
We compare our method against \method{Transformer}++ model~\citep{touvron2023llama} as well as the following sub-quadratic sequence models: 
\method{Linear-Attention~(LA)}~\citep{katharopoulos2020transformers}, 
\method{TTT}~\citep{sun2024ttt},
\method{DeltaNet}~\citep{yang2024parallelizing},
\method{Gated~DeltaNet}~\citep{yang2024gatedDeltaNet},
\method{Mamba2}~\cite{mamba2}.
As discussed in the paper,  the \model layers incorporate $\ell_2$-normalization on the state, whereas \method{TTT} applies $\ell_2$-normalization on the output of the decoding layer.

\begin{figure}[t]
    \centering
    \includegraphics[trim= 25 50 140 20,clip,width=.7\linewidth]{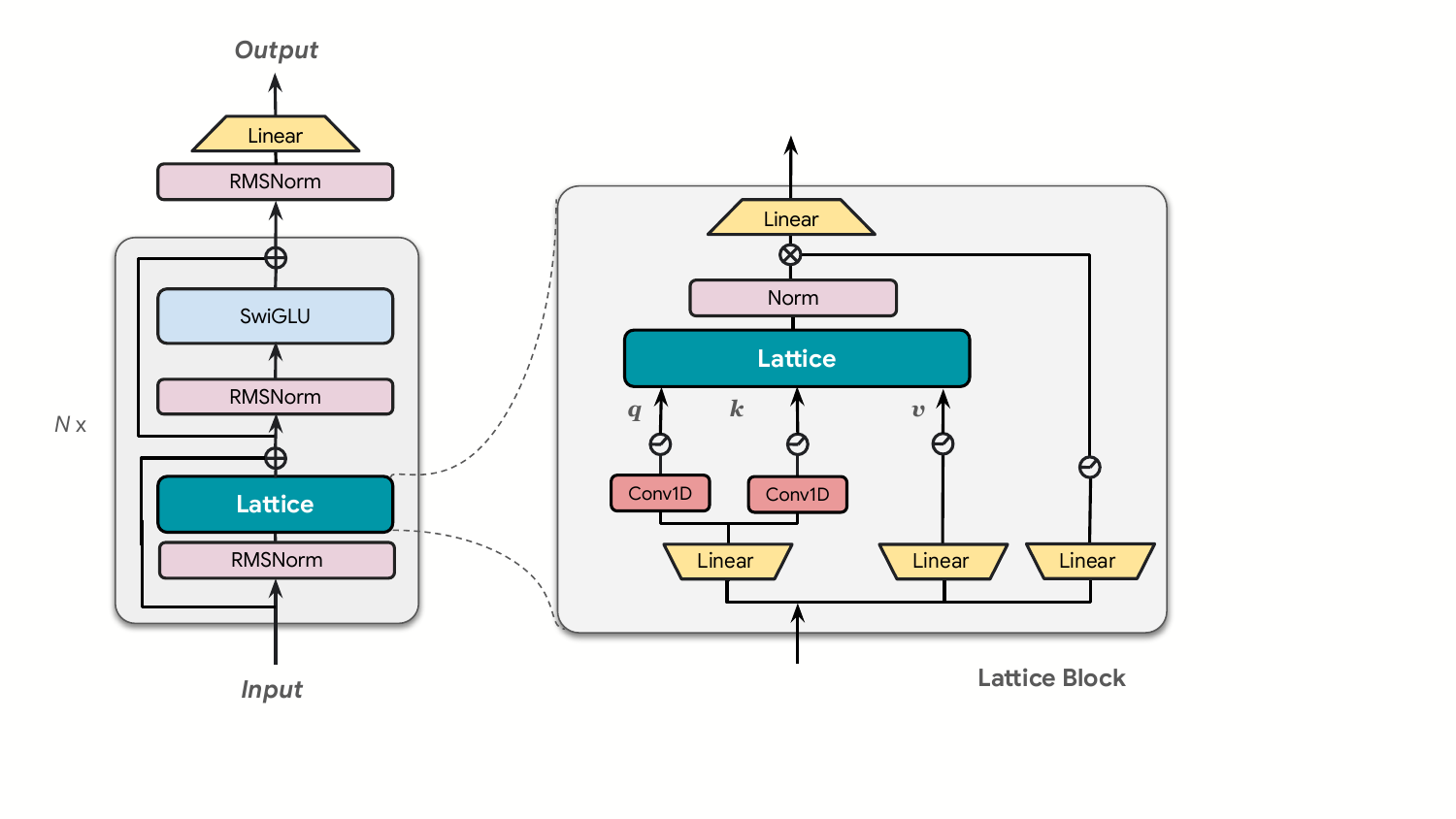}
    \vskip -5pt
  \caption{ \label{fig:model}
  \captionsize
      \figleft~
      Block diagram of the language model.
      \figright~
      The \model block. Following the architecture used in Mamba~\citep{gu2023mamba}, each sequence mixing block is composed of a pair of short $\texttt{Conv1D}$ for the pair $\{q, k \}$ and the \model is followed by a $\texttt{GeLU}$ post-gate.
  }
\end{figure}

\paragraph{Model Architecture} %
For sub-quadratic sequence models, we adopt the architectural setup used in Mamba~\citep{gu2023mamba}, where each sequence-mixing block consists of a pair of short \texttt{Conv1D} layers for the $\{\vq, \vk \}$ pair, which share a linear projection. A \texttt{GeLU} post-gate is applied to the output of the sequence model. The \method{Transformer}++ model, on the other hand, follows the architecture proposed in LLaMA~\citep{touvron2023llama}.
All the models follow the multi-head structure introduced in Transformers~\citep{vaswani2017attention}. The model architecture used for \model is illustrated in \autoref{fig:model}.

Instead of using a fixed learning rate as in standard gradient descent, we model the learning rate of the compression layer as an input-dependent neural network, $\gamma_t = \op{sigmoid}(\mathbf{W}_{\gamma} \vx_t)$. This is a key benefit of the bilevel optimization setup, as the weights of this network can be trained in the outer loop along with the rest of the model weights.

\paragraph{Hyperparameters.}
The block diagram of the language model is   in \autoref{fig:model}, and the model hyperparameters are listed in \autoref{tab:hyperparameters}.
We use the AdamW optimizer with a cosine learning rate schedule, which includes a
warm-up phase of 1k steps (0.5 billion tokens) and a final learning rate of $3\text{e-}5$. We apply a weight decay of 0.1 and gradient clipping at 1.0. The width of the short  convolution layer is set to 4.
For language model training on \method{FineWeb-Edu} and the reasoning tasks in \autoref{tab:FineWeb_340M_760_tasks}, a default chunk size of $C=4$ is used for \model unless otherwise specified.
For the ablation studies with small models in \autoref{tab:ablation} and the long-context scaling experiments (110M models, Books dataset) a chunk size of $C=1$ is used to evaluate the exact, non-approximated performance of the compression layer.

\begin{table}[htbp]
\centering

\begin{tabular}{l c c l l}
\toprule
\textbf{Configuration} & \textbf{$n_{\text{blocks}}$} & \textbf{$d_{\text{model}}$} & \textbf{$d_{\text{head}}$} & {Peak learning rate} \\
\midrule
110M params / 5B tokens & 12 & 768 & 64 & 1e-2 \\
340M params / 15B tokens & 24 & 1024 & 64 & 1.5e-3 \\
760M params / 30B tokens & 24 & 1536 & 64 & 1.25e-3\\
\bottomrule
\end{tabular}
\caption{Model and training hyperparameters.}
\label{tab:hyperparameters}
\end{table}

\begin{figure}
\centering
\begin{minipage}{.49\textwidth}
  \centering
  \begin{adjustbox}{width=.95\linewidth}
      \pgfplotsset{
    compat=1.16, %
    width=14cm, %
    height=9cm, %
    every axis plot/.append style={
        line width=1.5pt, %
        solid,            %
        mark size=2.5pt,  %
    }
}

\begin{tikzpicture}
    \begin{axis}[
        title={Books Dataset},
        xlabel={Context Length},
        ylabel={Perplexity  $\downarrow$},
        xmode=log, %
        log base x=2, %
        xtick={512, 1024, 2048, 4096, 8192, 16384},
        xticklabels={512, 1k, 2k, 4k, 8k, 16k},
        grid=major, %
        ymin=16.0, %
        ymax=22.8,
        legend columns=3,
        legend pos=north west, %
        legend cell align={left},
        grid=major, 
        ymode=linear,           %
    ]

    \pgfplotstableread{
    Context Transformer LinearAtt DeltaNet GatedDelta Mamba2 TTT ModelDEC ModelENC ModelSIM
    512     20.60     21.04     20.28    19.76   19.94   20.11    19.06   19.11   19.08
    1024    19.39     20.18     19.11    18.60   18.90   19.03    17.90   18.01   17.94
    2048    18.89     19.82     18.33    18.00   18.34   18.36    17.14   17.22   17.23
    4096    18.38     19.69     17.90    17.48   18.07   18.03    16.72   16.80   16.72
    8192    18.85     20.34     18.05    17.40   18.23   18.05    16.62   16.66   16.73
    16384   nan     21.86     18.12    17.49   18.48   18.46      16.97     nan     16.82
    }\datatablebooks

    \addplot+[color=cyan, mark=x] table [x=Context, y=Transformer] {\datatablebooks}; %
    \addlegendentry{\method{Transformer}++}

    \addplot+[color=blue, mark=o, dashed] table [x=Context, y=LinearAtt] {\datatablebooks}; %
    \addlegendentry{\method{Linear-Attention}}

    \addplot+[color=green, mark=triangle, solid] table [x=Context, y=DeltaNet] {\datatablebooks};
    \addlegendentry{\method{DeltaNet}}

    \addplot+[color=orange, mark=diamond] table [x=Context, y=Mamba2] {\datatablebooks};
    \addlegendentry{\method{Mamba2}}

    \addplot+[color=black, mark=x, solid] table [x=Context, y=GatedDelta] {\datatablebooks};
    \addlegendentry{\method{Gated-DeltaNet}}

    \addplot+[color=red, mark=square, solid] table [x=Context, y=TTT] {\datatablebooks};
    \addlegendentry{\method{TTT}}

    \addplot+[color=magenta, mark=star, solid] %
        table [x=Context, y=ModelDEC] {\datatablebooks};
    \addlegendentry{\textbf{\model-DEC}}
    
    \addplot+[color=brown, mark=star, solid] %
        table [x=Context, y=ModelENC] {\datatablebooks};
    \addlegendentry{\textbf{\model-ENC}}
    
    \addplot+[color=brown!50!black, mark=star, solid] %
        table [x=Context, y=ModelSIM] {\datatablebooks};
    \addlegendentry{\textbf{\model-SIM}}

    \end{axis}
\end{tikzpicture}
  \end{adjustbox}
\end{minipage}~\hfill{}~
\begin{minipage}{.49\textwidth}
  \centering
  \begin{adjustbox}{width=.95\linewidth}
    \pgfplotsset{
    compat=1.16, %
    width=14cm, %
    height=9cm, %
    every axis plot/.append style={
        line width=1.5pt, %
        solid,            %
        mark size=2.5pt,  %
    }
}

\pgfplotstableread{
Context Transformer LinearAtt DeltaNet GatedDelta Mamba2 TTT ModelDEC ModelENC ModelSIM
2048    11.58     12.56     11.62    11.31   11.51   11.59 10.88   10.90   10.89
8192    11.75     13.63     11.40    11.04   11.42   11.45 10.51   10.57   10.66
}\datatablepile

\begin{tikzpicture}
    \begin{axis}[
        title={The Pile Dataset},
        xlabel={Context Length},
        ylabel={Perplexity $\downarrow$},
        xmode=log, %
        log base x=2, %
        xtick={512, 1024, 2048, 4096, 8192, 16384},
        xticklabels={512, 1k, 2k, 4k, 8k, 16k},
        xmin=1500,
        xmax=12000,
        ymin=10.2, %
        ymax=14.5, %
        legend columns=3,
        legend pos=north west, %
        legend cell align={left},
        legend style={font=\small},
        grid=major,             %
        ymode=linear,           %
    ]

    \addplot+[color=cyan, mark=x] table [x=Context, y=Transformer] {\datatablepile}; %
    \addlegendentry{\method{Transformer}++}

    \addplot+[color=blue, mark=o, dashed] table [x=Context, y=LinearAtt] {\datatablepile}; %
    \addlegendentry{\method{Linear-Attention}}
    
    \addplot+[color=green, mark=triangle, solid] table [x=Context, y=DeltaNet] {\datatablepile};
    \addlegendentry{\method{DeltaNet}}

    \addplot+[color=orange, mark=diamond] table [x=Context, y=Mamba2] {\datatablepile};
    \addlegendentry{\method{Mamba2}}

    \addplot+[color=black, mark=x, solid] table [x=Context, y=GatedDelta] {\datatablepile};
    \addlegendentry{\method{Gated-DeltaNet}}

    \addplot+[color=red, mark=square, solid] table [x=Context, y=TTT] {\datatablepile};
    \addlegendentry{\method{TTT}}

    \addplot+[color=magenta, mark=star, solid] %
        table [x=Context, y=ModelDEC] {\datatablepile};
    \addlegendentry{\textbf{\model-DEC}}
    
    \addplot+[color=brown, mark=star, solid] %
        table [x=Context, y=ModelENC] {\datatablepile};
    \addlegendentry{\textbf{\model-ENC}}
    
    \addplot+[color=brown!50!black, mark=star, solid] %
        table [x=Context, y=ModelSIM] {\datatablepile};
    \addlegendentry{\textbf{\model-SIM}}

    \end{axis}
\end{tikzpicture}
  \end{adjustbox}
\end{minipage}
  \captionof{figure}{Model perplexity as a function of context length for models of size 110M parameters.
  \figleft~ displays results for the Books dataset vs context length  $\{512, 1024, 2k, 4k, 8k, 16k\}$  ; 
  \figright~ shows results for The Pile dataset vs context length  $\{2k, 8k\}$.
  Note that pre-training Transformers from scratch often performs poorly on very long contexts (e.g., 16k); the common approach is finetuning from shorter-context models~\citep{touvron2023llama}. 
  Therefore, the Transformer results shown here are limited to context lengths $T \le 8k$.
  }
    \label{fig:contextlength_Pile_Books}
\vskip -15pt
\end{figure}

\paragraph{Ablation: Scaling Memory Size.}
\autoref{fig:ablation_m} illustrates the scaling behavior of \model with respect to the number of memory slots $m$.
We observe that the performance of \method{\model} scales linearly with respect to $\log(m)$  (noting the logarithmic scale on the x-axis).
Notably, \model with only a quarter of the memory slots ($m=d/4=16$) outperforms \method{TTT} with full memory ($m=64$), which empirically validates that the proposed orthogonal update mechanism utilizes the fixed-state state capacity significantly more efficiently, enabling \model to achieve $4\times$ compression gain.
It is worth noting that while increasing $m$ yields consistent gains, models with $m > d$ come at the cost of increased parameter counts in the embedding projection matrices $\mathbf{W}_q$ and $\mathbf{W}_k$ (and vice versa for models with $m < d$).

\begin{figure}[htbp]
\centering

\definecolor{cLattice}{RGB}{23, 190, 207}   %
\definecolor{cTTT}{RGB}{214, 39, 40}        %
\definecolor{cDeltaNet}{RGB}{148, 103, 189} %

\begin{tikzpicture}
    \begin{axis}[
        width=8cm, height=5.5cm,
        xlabel={Number of Memory Slots $m$},
        ylabel={Perplexity $\downarrow$},
        xmode=log, %
        log base x=2, %
        grid=major,
        grid style={dashed, gray!30},
        legend pos=north east,
        xtick={32, 64, 96, 128, 160, 192},
        xticklabels={32, 64, 96, 128, 160, 192},
        xticklabel style={font=\scriptsize}, 
        ymin=14.5, ymax=16.5,
        yticklabel style={font=\scriptsize}, 
        every axis plot/.append style={thick, mark size=2.5pt}
    ]
    
    \addplot[
        color=cLattice,
        mark=pentagon*,
    ] coordinates {
        (16, 15.52)
        (32, 15.26)
        (64, 15.02)
        (96, 14.89)
        (128, 14.84)
        (160, 14.76)
        (192, 14.71)        
    };
    \addlegendentry{Lattice}

    \addplot[
        color=cDeltaNet,
        mark=otimes*,
        only marks,
        mark size=3pt
    ] coordinates {
        (64, 16.35)
    };
    \addlegendentry{DeltaNet}

    \addplot[
        color=cTTT,
        mark=star,
        only marks,
        mark size=3pt
    ] coordinates {
        (64, 15.60)
    };
    \addlegendentry{TTT}

    \end{axis}
\end{tikzpicture}
\caption{
\textbf{Ablation on Memory Size ($m$).}
Validation perplexity of \model{}'s (110M parameters), trained on \method{FineWeb-Edu} with varying number of memory slots $m$.
For comparison, \method{DeltaNet} and \method{TTT} are plotted at their standard configuration of $m=64$ (equivalent to head dimension $d=64$).
\method{\model} consistently outperforms baselines, achieving lower perplexity even with half the memory capacity.
}
\label{fig:ablation_m}
\end{figure}

\paragraph{Ablation: Per-token vs. Per-chunk Normalization.}
\autoref{tab:ablation_norm} compares the perplexity of the chunk-wise approximation \eq{eq:recurrence_linear1} using per-token versus per-chunk normalization strategies, where the latter applies normalization solely at chunk boundaries.
The results indicate that the performance divergence between the two strategies widens as the chunk size increases. Furthermore, for the simplified approximation \eq{eq:recurrence_linear2} with a chunk size of $C=16$, we observed that per-chunk normalization resulted in training instability. These results highlight the importance of per-token normalization for ensuring both performance and stability with larger chunk sizes. We attribute this to the fact that applying normalization exclusively at chunk boundaries creates a discrepancy between intra-chunk and inter-chunk update dynamics.

\begin{table*}
\centering
  \centering
    \footnotesize
\caption{ \label{tab:ablation_norm}
{
Ablation study of \model{}'s per-token vs. per-chunk normalization (110M parameters, trained on \method{FineWeb-Edu}). 
The perplexity of the chunk-wise approximation \eq{eq:recurrence_linear1} using per-token vs. per-chunk normalization strategies, where the normalization step is only applied after each chunk.
}
}
\setlength{\tabcolsep}{4pt}
\renewcommand{\arraystretch}{1.3}
\centering
\begin{tabular}{lc}
        \toprule
        \method{\textbf{Configuration}} & ppl $\downarrow$ \\
        \midrule
        \method{DeltaNet} & 16.35  \\    
        \method{TTT~C\!=\!1} & 15.60  \\        
        \arrayrulecolor{black!30}\midrule
        \method{\model~C\!=\!4} Per-token&  15.15 \\
        \method{\model~C\!=\!4} Per-chunk &  15.17 \\
        \method{\model~C\!=\!16} Per-token&     15.32 \\
        \method{\model~C\!=\!16} Per-chunk &    15.36 \\

    \arrayrulecolor{black}\bottomrule
\end{tabular}

    \vspace{-5pt}
\end{table*}

\subsection{{Recall-Intensive Task: Multi-Query Associative Recall}} \label{sec:exp_mqar}
{
To rigorously evaluate the memory capacity and  in-context learning capabilities of \model, we evaluate the model on Multi-Query Associative Recall (MQAR) task introduced by \citet{arora2023zoology}.

Prior research indicates that a model's capability in associative recall task is strongly predictive of its in-context learning quality and language modeling performance~\citep{olsson2022context}. 
While standard associative recall tests a model's ability to retrieve a value for a single query after a sequence of key-value pairs, MQAR introduced by \citet{arora2023zoology} as a more challenging and realistic scenario where the model must perform multiple recalls at varying positions within a single forward pass. 
In this task, a sequence consists of a series of key-value pairs followed by a series of queries (keys) mixed with other tokens. The model must correctly predict the value associated with a specific key. An example sequence is formatted as follows:
$$
\text{A 4 B 3 C 6 F 1 E 2} \dots \text{A ? C ? F ? E ? B ?} \rightarrow \text{4 6 1 2 3}
$$
In this setup, the model must compress the association of the key-value pairs in its memory state over potentially long distances and retrieve them correctly when queried multiple times.
Therefore, MQAR serves as a strong benchmark for the memory capacity of recurrent architectures while Transformer architecture can easily solve this task.

\paragraph{Experimental Setup.}
We follow~\citet{arora2023zoology} and adopt the experimental setup used in \citet{beck2024xlstm} for the more difficult setting. We generate 100,000 synthetic training samples and 3,000 validation samples from a vocabulary size of 8192. All models are trained for 64 epochs using the AdamW optimizer with a cosine annealing learning rate schedule after a 10$\%$ linear warm-up phase and with a weight decay of $0.1$. 

We train models with two blocks using a single head, while varying the Model Dimension ($d$) to observe scaling behaviors.
We set the number of slots $m=d$ for \method{\model}, \method{TTT}, \method{DeltaNet}, and \method{GLA}; consequently, each block maintains a state matrix of size $d^2$. In comparison, \method{xLSTM} uses an additional normalizer state vector, resulting in a larger state size of $d^2+d$ per block.

We evaluate \model on two challenging settings across different context lengths ($L$)  and high numbers of key-value pairs. For each setting, we sweep over learning rate grids to ensure optimal convergence:
\begin{itemize}
\item {Context Length $L$=1024:} We sweep learning rates over $\{1\text{e-}2, 3.16\text{e-}3, 1\text{e-}3,  3.16\text{e-}4, 1\text{e-}4\}$ with a batch size=96.
\item {Context Length $L$=2048:} We sweep learning rates over $\{1\text{e-}2, 1\text{e-}3, 2.2\text{e-}4, 5\text{e-}5, 1\text{e-}5\}$ with a batch size=24.
\end{itemize}

For each context length, we scale the difficulty by varying the number of Key-Value pairs that must be memorized, testing $N_{KV} \in \{48, 96, 256\}$. 
This progression specifically evaluates  memory capacity of the model, determining whether the architecture can effectively store and recall a high density of information within the fixed-size state, thereby testing the efficiency of the compression mechanism proposed by \model.

As the results in \autoref{fig:MQAR_hard} show, \model offers the best MQAR accuracy over the majority of the settings and it exhibits a significant improvement over its main counterparts: \method{TTT} and \method{DeltaNet}. Furthermore, while baselines exhibit performance degradation as context length increases, \model maintains its accuracy across both context lengths. 
\textit{This overall shows the efficient compression mechanism of \model, demonstrating its superior capability to store and retrieve dense information within its fixed-size recurrent memory.}
}

\input{FigureTable/MQAR_large}

\subsection{Throughput Comparison.} 
\begin{figure}[htbp]
\centering
\definecolor{cTransformer}{RGB}{214, 39, 40}   %
\definecolor{cTTT}{RGB}{31, 119, 180}          %
\definecolor{cLattice}{RGB}{44, 160, 44}       %

\pgfplotsset{
    throughputstyle/.style={
        width=7cm, height=5.5cm,
        xlabel={Sequence Length},
        xmode=log,
        log basis x=2,
        xtick={2048, 4096, 8192, 16384},
        xticklabels={2k, 4k, 8k, 16k},
        ymin=0, ymax=350, %
        grid=major,
        grid style={dashed, gray!30},
        every axis plot/.append style={thick, mark size=2.5pt},
        legend style={at={(0.5, -0.25)}, anchor=north, legend columns=-1},
    }
}

\begin{tikzpicture}
    \begin{groupplot}[
        group style={
            group size=2 by 1,
            horizontal sep=1.5cm,
            vertical sep=1.5cm
        },
        throughputstyle
    ]

    \nextgroupplot[
        title={\textbf{Chunk Size = 64}},
        ylabel={Throughput (k tokens/sec)},
        legend to name=SharedLegend
    ]

    \addplot[color=cTransformer, mark=square*] coordinates {
        (2048, 341.462)
        (4096, 211.193)
        (8192, 119.754)
        (16384, 65.136)
    };
    \addlegendentry{Transformer++}

    \addplot[color=cTTT, mark=*] coordinates {
        (2048, 260.307)
        (4096, 243.906)
        (8192, 220.304)
        (16384, 172.597)
    };
    \addlegendentry{TTT}

    \addplot[color=cLattice, mark=triangle*] coordinates {
        (2048, 233.099)
        (4096, 211.111)
        (8192, 192.537)
        (16384, 158.582)
    };
    \addlegendentry{\model}

    \nextgroupplot[
        title={\textbf{Chunk Size = 16}},
        ylabel={} %
    ]

    \addplot[color=cTransformer, mark=square*] coordinates {
        (2048, 341.462)
        (4096, 211.193)
        (8192, 119.754)
        (16384, 65.136)
    };

    \addplot[color=cTTT, mark=*] coordinates {
        (2048, 157.113)
        (4096, 162.459)
        (8192, 120.084)
        (16384, 74.394)
    };

    \addplot[color=cLattice, mark=triangle*] coordinates {
        (2048, 123.583)
        (4096, 109.989)
        (8192, 91.262)
        (16384, 69.707)
    };

    \end{groupplot}

    \node at ($(group c1r1.south)!0.5!(group c2r1.south) + (0,-1.3cm)$) {\ref{SharedLegend}};
\end{tikzpicture}
\caption{
{\textbf{Training Throughput vs. Sequence Length.}
Throughput (in k tokens/sec) for 110M parameter models trained on a 2$\times$2 TPU Trillium setup.
\textbf{Left:} Throughput using a Chunk Size of $64$.
\textbf{Right:} Throughput using a Chunk Size of $16$.
Due to its recurrent formulation, \method{\model} maintains higher throughput than Transformer++ as sequence length increases, remaining competitive with 
\method{TTT}.
Lower chunk sizes generally reduce throughput due to lower FLOP utilization and reduced parallelism.
}
}

\label{fig:throughput}
\end{figure}
The training throughput are compared in \autoref{fig:throughput}. For 
\method{Transformer}++, we use Flash-Attention-2~\citep{dao2023flashattention}  with a block size of 256, while the rest of the models use a chunk size of 64.  
The results demonstrate that \method{\model} exhibits superior throughput scalability compared to Transformers as sequence length increases, while remaining competitive with \method{TTT}. Our results confirm that \method{\model} maintains the favorable sub-quadratic scaling characteristic of RNNs while offering improved expressivity. Both \method{\model} and \method{TTT} are implemented in pure \method{Jax}, without relying on specialized hardware-optimized kernels.

\clearpage
\section{Implementation}
\label{apdx:implementation}
\begin{lstlisting}[language=Python,style=mystyle,caption={{A simple implementation of the \model update rule.}
}]

from functools import partial
import jax
import jax.numpy as jnp
from flax import linen as nn
from jax import lax, vmap
from einops import rearrange
from jax.numpy.linalg import norm 

@partial(jax.jit, static_argnames=['mode', 'chunk_size'])
def lattice_compress(S0, K, V, Q, eta, mu, mode='Dec', chunk_size=16):
    """
    A simple Lattice operation with recursive update.
    S0: (BxHxMxD)
    K: (BxLxHxM), V: (BxLxHxD), Q: (BxLxHxM), eta/mu: (BxLxHx1)
    """
    # split chunks and transpose to B H NC C d
    K = rearrange(K, 'b (nc c) h m -> b h nc c m', c=chunk_size)
    V = rearrange(V, 'b (nc c) h d -> b h nc c d', c=chunk_size)
    Q = rearrange(Q, 'b (nc c) h m -> b h nc c m', c=chunk_size)
    eta = rearrange(eta, 'b (nc c) h 1 -> b h nc c 1', c=chunk_size)
    mu = rearrange(mu, 'b (nc c) h 1 -> b h nc c 1', c=chunk_size)

    if chunk_size == 1:  # scan over sequence
      scan_fn = partial(compress_step, mode=mode)
    else:  # scan over chunks
      scan_fn = partial(compress_chunk, mode=mode)
    
    def compress_scan(s0, k,v,q,e,m):
        return lax.scan(scan_fn, s0, (k,v,q,e,m))

    # vmap over batch and heads
    batch_head_scan_fn = vmap(
        vmap(compress_scan, axis_name="heads"),
        axis_name="batch")
    S, Y = batch_head_scan_fn(S0, K, V, Q, eta, mu)
    Y = rearrange(Y, 'b h nc c d -> b (nc c) h d')
    return S, Y
  
def compress_step(S, inputs_t, mode='Dec'):
  # S: MxD
  # inputs_t: k:1xM, v:1xD, q:1xM, eta:1x1, mu:1x1
  k, v, q, eta, mu = inputs_t

  s_norm = norm(S, axis=-1, keepdims=True) + 1e-6
  Phi = S / s_norm 
  
  if mode == 'Dec':
    h = k @ Phi - v
  elif mode == 'Sim':
    h = - v

  # 1) orthogonal projection (* $\vh_t^{\perp \vs_i}$ \eq{eq:similarity_recurrence}, \eq{eq:OSR} *): compute Delta  (* \Eqref{eq:recurrence_detailed} *) 
  h_hat= (h @ Phi.T) / s_norm.T
  k_hat = k / s_norm.T
  Delta1 = k_hat.T @ h
  Delta2 = S * (h_hat * k_hat).T
  Delta = Delta1 - Delta2
  
  # #VERIFY ORTHOGONALITY of Delta and S
  # jax.debug.print("<Delta,S>: {}",jnp.einsum('md, md->m',Delta,S))
    
  # 2) update S and normalize it (* $\vbeta_t$ *) 
  S_t = mu * S - eta * Delta
  beta = s_norm * jax.lax.rsqrt(
      ((mu*S)**2 + (eta * Delta)**2).sum(axis=-1, keepdims=True))
  # beta = s_norm / (norm(S_t, axis=-1, keepdims=True)+ 1e-6)
  S_t *= beta       # (* (\Eqref{eqn:normalized_recurrence}) *) 
  # readout
  y = q @ S_t
  return S_t, y

def compress_chunk(S, inputs_t, mode='Dec'):
  # S: MxD
  # inputs_t: K:CxM, V:CxD, Q:CxM, eta:Cx1, mu:Cx1
  K, V, Q, eta, mu = inputs_t
  
  s_norm = norm(S, axis=-1, keepdims=True) + 1e-6 # Mx1
  Phi = S / s_norm # MxD
  
  if mode == 'Dec':
    H = K @ Phi - V  # CxD
  elif mode == 'Sim':
    H = - V  # CxD
  H_hat= (H @ Phi.T) / s_norm.T  # CxM : (* \Eqref{eq:recurrence_detailed} *)
  K_hat = K / s_norm.T  # CxM
  
  # compute (* $\| \Delta \mS \|^2$ and $\vbeta$ \Eqref{eq:beta_chunk} *)
  Delta1_sq = K_hat**2 * (H*H).sum(axis=-1)[:, None] # CxM
  Delta2_sq = (s_norm.T**2) * (H_hat * K_hat)**2     # CxM
  Delta_sq = jnp.clip(Delta1_sq - Delta2_sq, min=0.) # CxM
  Delta_sq = eta**2 * Delta_sq
  beta = s_norm.T * jax.lax.rsqrt(eta**2 * s_norm.T**2 + Delta_sq)   # (CxM)
  
  G_gate = beta * (mu+ eta*(H_hat*K_hat)) # CxM : (* \Eqref{eq:recurrence_linear2} *)
  
  # Parallel intra-chunk based on linear attention 
  # with vectorized forgetting gate (GLA): (* \Eqref{eq:recurrence_linear2} *)
  K_tilde = -beta * eta * K_hat  # CxM
  S_t, Y = GLA_intra_chunk(Q=Q, K=K_tilde, V=H, G=G_gate, S0=S)
  return S_t, Y


\end{lstlisting}

\paragraph{Limitation and Future Work.}
The high expressiveness of the proposed orthogonal projection comes at the cost of an extra   $\mathcal{O}(d\,m)$ computation compared to simpler linear RNNs, as detailed in section \ ref{sec:complexity}. 
However, the final update rule, presented in its chunk-wise form in sections \ref{sec:complexity} and \ref{app:parallel_details}, relies entirely on standard, hardware-efficient operators such as matrix and element-wise multiplications.
While the proposed chunk-wise approximations in \autoref{eq:recurrence_linear1} and \autoref{eq:recurrence_linear2} enable parallelizable matrix multiplications within chunks, the non-linear nature of this test-time training prevents the use of inter-chunk parallelization via associative scans, which limits hardware utilization compared to fully linear recurrences.
We believe that exploring hardware-efficient strategies remains a key area for future work. Specifically, tailoring techniques such as the large-chunk training strategies proposed in \method{LaCT}~\citep{zhang2025TTTDR}, or the multi-stage and hierarchical process proposed in~\citet{li2025tnt} for the non-linear recurrence of \model represents a promising direction to maximize hardware utilization while retaining the benefits of the principled orthogonal update.

\clearpage

\end{document}